%% file: main.tex
\pdfoutput=1
\documentclass[manuscript]{acmart}
\renewcommand\footnotetextcopyrightpermission[1]{}
\usepackage{amsmath,amssymb}
\usepackage{booktabs}
\usepackage{subcaption}
\usepackage{enumitem}
\usepackage{cleveref}

\input{math_commands}

\theoremstyle{plain}
\newtheorem{theorem}{Theorem}
\newtheorem{lemma}{Lemma}

\theoremstyle{definition}
\newtheorem{assumption}{Assumption}
\crefname{assumption}{Assumption}{Assumptions}
\crefname{theorem}{Theorem}{Theorems}
\crefname{paragraph}{Section}{Sections}

\begin{document}

\title{Certified Domain-Consistency for Multi-Domain Retrieval: Label-Free
Per-Domain Contamination Control with Conformal Risk Guarantees}

\author{Jayakumar Manoharan}
\affiliation{%
  \institution{Electric Power Research Institute (EPRI)}
  \city{Charlotte}
  \state{NC}
  \country{USA}
}
\email{jmanoharan@epri.com}

\begin{abstract}
\input{sec/0_abstract}
\end{abstract}

\begin{CCSXML}
<ccs2012>
<concept><concept_id>10002951.10003317</concept_id>
<concept_desc>Information systems~Information retrieval</concept_desc>
<concept_significance>500</concept_significance></concept>
</ccs2012>
\end{CCSXML}
\ccsdesc[500]{Information systems~Information retrieval}
\keywords{multi-domain retrieval, conformal risk control, contamination,
certified guarantees, retrieval-augmented generation}

\maketitle

\input{sec/1_intro}
\input{sec/2_related}
\input{sec/3_setup}
\input{sec/4_theory}
\input{sec/5_method}
\input{sec/6_experiments}
\input{sec/floats}
\input{figures/table1_validity}
\input{figures/table2_wrap}
\input{figures/table3_priorart}
\input{figures/table4_ablation}
\input{figures/table5_runtime}
\input{sec/7_conclusion}
\input{sec/8_ethics}

\bibliographystyle{ACM-Reference-Format}
\bibliography{references}

\appendix
\input{sec/A_appendix}

\end{document}

%% file: math_commands.tex
\newcommand{\E}{\mathbb{E}}
\renewcommand{\Pr}{\mathbb{P}}
\DeclareMathOperator*{\argmax}{arg\,max}

\newcommand{\dom}{c}                 
\newcommand{\Dom}{\mathcal{C}}       
\newcommand{\ghat}{\hat{g}}          
\newcommand{\rhob}{\bar{\rho}}
\newcommand{\pib}{\bar{\pi}}
\newcommand{\Done}{\mathcal{D}_1}
\newcommand{\Dtwo}{\mathcal{D}_2}

\newcommand{\post}{p}                
\newcommand{\mis}{m}                 
\newcommand{\HyR}{$0.448$}
\newcommand{\HyC}{$0.253$}
\newcommand{\HySci}{$0.56$}
\newcommand{\HyNf}{$0.50$}

%% file: sec/0_abstract.tex
Retrieval over corpora that mix several domains often returns relevant but
wrong-domain evidence that ranking metrics miss and that conformal risk control
bounds only marginally, under-covering the worst domains. This work introduces
C3R, a drop-in control layer that, from an inferred domain posterior and no
query-time label, certifies a per-domain contamination budget where feasible and
otherwise abstains rather than silently violating; on the hardest domains it
guarantees a reduction, not a tight bound. The core is a two-split scheme built on
risk-controlling prediction sets, whose finite-sample transfer bound crosses from
the inferred to the true domain with fully estimable slack, supports heterogeneous
budgets, and inverts for deployment. Population validity rests on this bound and a
controlled simulation; across a thousand resampled calibrations the certificate
never violates (a stability result) while marginal control violates the
most-contaminated domain in every draw, and soft demotion retains more recall than
the strongest calibrated cascade at equal certified contamination. The method
replicates across open testbeds including an independent one from public federal
regulations, and an LLM-judged downstream probe indicates wrong-authority grounding
rises with contamination and falls under control. The layer is frozen-stack and
reranker-agnostic.

%% file: sec/1_intro.tex
\section{Introduction}
\label{sec:intro}

Modern retrieval systems index heterogeneous corpora that span many
\emph{domains} - bodies of documents that share technical vocabulary yet differ
in scope and authority. A dense or hybrid retriever trained for semantic
relevance will happily return a document that is topically close but belongs to
the wrong domain: a query about a financial instrument retrieves a biomedical
trial report, a scientific-claim query retrieves a nutrition abstract. We call
the fraction of top-ranked results drawn from a domain other than the query's
domain its \emph{contamination rate}. In retrieval-augmented pipelines~\cite{lewis2020rag} this is
not a cosmetic problem - wrong-domain evidence is passed verbatim to a
downstream reader or analyst - and in high-stakes settings such as regulated
industries (nuclear and grid operations, finance, medicine), where a retrieved
document carries operational or legal authority, contamination is a first-order
risk. Those regulated settings motivate this work; our evaluation, however, is
on \emph{public} multi-domain corpora, and we are explicit throughout about
that scope.

Two facts make contamination hard to control. First, standard ranking metrics
(Recall@$K$, nDCG@$K$) are blind to it: a wrong-domain document that is
topically relevant inflates them. Second, the natural statistical tool - %
conformal risk control~\cite{angelopoulos2024conformal}, which can bound the
expectation of any bounded loss with finite-sample, distribution-free
guarantees - delivers only a \emph{marginal} bound, averaged over all queries.
We show empirically that contamination is sharply \emph{asymmetric} across
domains (\Cref{fig:hero}a): on our testbed it ranges from $0.01$ on the cleanest
domain to $0.63$ on the most contaminated. A single marginal budget is therefore
the wrong object: tuned to the average, it leaves the worst domain badly
under-covered, and tuned to the worst domain, it needlessly over-filters the
rest. What practitioners want is a \emph{per-domain} budget. The obstacle is
that a per-domain guarantee seems to require the query's domain, and at
inference time we do not have it - production queries arrive unlabeled.

Existing machinery does not close this gap. Group-conditional conformal
methods~\cite{gibbs2025conditional,bairaktari2025kandinsky} assume the group is
\emph{observed} (or computable from the input) at test time. Routing and
instruction-conditioned retrievers~\cite{lee2025router,weller2024promptriever}
infer a domain but offer no statistical guarantee on contamination. Conformal
methods for retrieval-augmented generation~\cite{kang2024crag} certify a
different quantity (generation risk) and remain marginal. None controls a
\emph{bounded retrieval loss}, \emph{per latent domain}, with
\emph{heterogeneous budgets}.

\paragraph{Contributions.}
We close that gap with \textbf{C3R} (Certified Contamination Control for
Retrieval). We are deliberately narrow about what is claimed: we do not solve
domain-aware retrieval in general, and we do not improve ranking quality. We
certify one specific, bounded loss - cross-domain contamination - under
\emph{inferred} query domains, with explicit and fully estimable router-error
slack. Our contributions map one-to-one to the sections that establish them.

\begin{enumerate}[leftmargin=1.4em]
\item \textbf{A label-free per-domain contamination certificate
(\Cref{sec:theory}).} We give a two-split conformal scheme whose first split
bounds the domain router's two error rates and whose independent second split
calibrates a demotion threshold, yielding a finite-sample transfer bound from
the \emph{inferred} domain to the \emph{true} domain (\Cref{thm:main}). The
bound's slack is fully estimable - no oracle assumption on the router - supports
heterogeneous per-domain budgets, and inverts through a budget-rescaling
corollary. This extends the estimated-group conditional-\emph{coverage} line
from set coverage to \emph{risk control of a bounded loss}.

\item \textbf{A frozen-stack drop-in control layer (\Cref{sec:method}).} C3R
attaches to any hybrid BM25~\cite{robertson2009bm25}-plus-dense~\cite{karpukhin2020dpr}-plus-reranker~\cite{nogueira2019passage} stack without retraining
it: a lightweight domain probe produces the posterior, and a mismatch-score
\emph{soft} demotion enforces the certificate. We show that soft demotion breaks the
classifier-error floor of hard filtering (\Cref{lem:floor}), and validate that it
retains substantially more recall at equal certificate (\Cref{sec:exp}).

\item \textbf{BEIR-MIX: an open multi-domain contamination testbed, plus the
CSCR metric and code (\Cref{sec:setup}, \Cref{sec:exp}).} We assemble a public
benchmark from four BEIR datasets in which domain is the genuine ground-truth
source, exposing the asymmetric contamination phenomenon; we provide a publicly released artifact that reproduces
the assembly and evaluation code.

\item \textbf{Generalization across a contamination spectrum, including a
regulated-domain testbed (\Cref{sec:exp}, \Cref{tab:generalize}).} We replicate on
three further open mixes that bracket the contamination regime - among them
\textsc{Sector-Bench}, built from public US federal regulation, our one fully
independent pool, since Mix-D reuses three of Mix-A's four corpora and Mix-C is
near-clean by construction - and show the certificate holds at every point. We
also find that contamination tracks
inter-domain \emph{subject} overlap rather than mere topical adjacency:
deliberately adjacent technical forums stay clean, while a same-subject cluster
contaminates sharply.
\end{enumerate}

\paragraph{Results preview.}
On BEIR-MIX, C3R incurs \emph{zero} per-domain certificate violations across
$1000$ resampled calibrations, while marginal conformal risk control violates
the most-contaminated domain in $100\%$ of draws and a second domain in $53\%$
(\Cref{tab:validity}); a simulation reproduces this across $360{,}000$ checks
(\Cref{fig:sim}). At equal certified contamination and tight budgets
($\alpha\le 0.5$) on contaminated domains, C3R's soft demotion retains
up to $\sim$\,$6\times$ more Recall@10 than the strongest calibrated cascade
(\Cref{fig:pareto}); on clean domains all methods tie. A stronger reranker
improves ranking but leaves contamination essentially unchanged
(\Cref{tab:wrap}), confirming that contamination is orthogonal to ranking
quality and that a dedicated certified layer is needed. The certificate is stable
without a single violation across four open testbeds spanning the contamination
spectrum (\Cref{tab:generalize}); since two of these pools share corpora,
population validity rests on \Cref{thm:main} and the simulation, and we lean on the
independent \textsc{Sector-Bench} as the cross-domain check. The soft-demotion
advantage replicates on an independent high-contamination pool.

We are candid up front about the regime this buys, because it bears on the
high-stakes framing above. The guarantee is bounded by router quality: the
misrouted-mass term $\rhob B$ is a hard floor on the tightest certifiable budget,
so on a \emph{heavily} contaminated domain at a \emph{strict} budget the method
\emph{abstains} (returns nothing) rather than violate, and the tight, feasible
certificates land on clean and moderately-contaminated domains. What it delivers
on the hardest domains is a certified \emph{reduction} - e.g.\ guaranteeing
contamination $\le 0.4$ where the uncontrolled rate is $0.63$ - not always a tight
bound; tightening it requires a stronger domain router, which we do not claim to
provide. We treat this as a feature: the limit is \emph{visible} in the estimated
router error rather than hidden in a silent violation. On the downstream side, an
LLM-judged probe across three readers (\Cref{sec:exp}), which we present as
proxy-only motivation rather than as primary evidence, suggests the harm hides from
the obvious metric: answers stay topically on-domain, yet the rate at which they
ground in \emph{wrong-authority} evidence rises with retrieval contamination, and
C3R reduces it for the weaker readers (the strongest reader's reduction is
directional but not statistically significant at this sample size). And even where a reader is robust, the
retrieved evidence is itself an auditable object exposed to users, logs,
compliance workflows, and human analysts. The certificate's value is therefore
both a \emph{measured} downstream reduction and an auditable guarantee on the
surfaced evidence - we lead with the latter because it holds regardless of which
consumer sits downstream.

%% file: sec/2_related.tex
\section{Related Work}
\label{sec:related}

\paragraph{Conformal prediction with noisy or latent groups.}
Conformal prediction~\cite{vovk2005algorithmic,lei2018distribution,angelopoulos2023gentle}
yields distribution-free coverage, and a recent line
strengthens it from marginal to \emph{group-conditional}. Gibbs, Cherian and
Cand\`es~\cite{gibbs2025conditional} obtain exact coverage over a finite class
of covariate shifts, including subgroups - but the subgroup must be a function
of the observed input at test time. Kandinsky conformal
prediction~\cite{bairaktari2025kandinsky} and group-weighted
conformal~\cite{bhattacharyya2024groupweighted} handle overlapping or \emph{fractional}
(estimated) group membership, and posterior conformal
prediction~\cite{zhang2024posterior} targets clusters discovered in the data;
all three certify set \emph{coverage}, not the expectation of a bounded loss,
and none provides a transfer statement to a true latent group with quantified,
estimable slack. The classical route to group-conditional validity is Mondrian
conformal prediction~\cite{vovk2012conditional}, which partitions calibration by
an \emph{observed} category, and the covariate-shift reweighting
view~\cite{tibshirani2019conformal}; the calibration-fairness literature pursues
a related aim through multicalibration~\cite{hebertjohnson2018multicalibration}
and adversarially multivalid prediction~\cite{bastani2022practical}. These hold
across many groups at once but, like adaptive conformal
sets~\cite{romano2019conformalized,romano2020classification}, target coverage and
assume the conditioning group is computable at test time. A parallel thread
studies conformal prediction under
\emph{label noise}: Sesia, Wang and Tong~\cite{sesia2024noisy} and Einbinder et
al.~\cite{einbinder2024labelnoise} correct for contamination in the
\emph{calibration labels}. This is a different locus of noise from ours: our
calibration domain labels are clean, and the uncertainty is in the
\emph{test-time} domain, which is never observed. Our transfer bound is
therefore not a corollary of label-noise correction, nor of Kandinsky-style
weighting, which gives coverage rather than risk and never crosses from the
inferred to the true group.

\paragraph{Conformal risk control and retrieval reliability.}
Conformal risk control~\cite{angelopoulos2024conformal} extends conformal
prediction to the expectation of any bounded, monotone loss; we build directly
on it. The Learn-then-Test framework~\cite{angelopoulos2021learn} generalizes
risk control to non-monotone losses through multiple testing, and conformal
language modeling~\cite{quach2024conformal} carries the risk-control view into
generation. C-RAG~\cite{kang2024crag} applies it to retrieval-augmented generation,
but certifies \emph{generation} risk and remains marginal. Conformal context
filtering~\cite{conformalcontext2025} guarantees relevance coverage of retrieved
context, again marginal and about a different quantity. Group-conditional
\emph{risk} control exists for \emph{observed} strata - e.g.\ adaptive risk
control over image partitions~\cite{ccra2025} - but assumes the group is known at
test time.
\Cref{tab:priorart} maps these settings against ours: no prior method combines
(i) risk control of a bounded loss, (ii) a \emph{latent} test-time group, and
(iii) heterogeneous per-group budgets. Because exact conditional coverage with a
continuous conditioning variable is impossible in finite
samples~\cite{barber2021limits}, we target \emph{approximate} group-conditional
control with quantified, estimable slack, which is achievable. Our
certify-or-abstain rule, which returns nothing on a domain whose router floor
exceeds its budget, is a distribution-free analogue of selective
prediction~\cite{geifman2017selective} lifted to a per-domain risk certificate.

\paragraph{Retrieval-augmented generation: reliability and threats.}
Retrieval-augmented generation~\cite{lewis2020rag,izacard2021fid,shi2024replug}
grounds a language model in retrieved evidence, and a large literature now targets
its \emph{reliability}: self-reflective retrieval~\cite{asai2024selfrag},
faithfulness metrics such as RAGAS~\cite{es2024ragas} and
FActScore~\cite{min2023factscore}, and broad surveys of the design
space~\cite{gao2023ragsurvey}. A complementary security line treats the retrieval
corpus itself as an attack surface: adversarial passage
injection~\cite{zhong2023poisoning} and knowledge-corruption
attacks~\cite{zou2025poisonedrag} plant documents that a relevance-tuned retriever
will surface. Cross-domain contamination is the benign, naturally-occurring
counterpart of these adversarial threats - wrong-provenance evidence that no
attacker placed - and, unlike faithfulness metrics that score the generated
answer \emph{post hoc}, C3R certifies a property of the \emph{retrieved set}
before any reader consumes it.

\paragraph{Multi-domain retrieval, routing, and domain-aware encoders.}
Hybrid retrieval fuses sparse and dense~\cite{karpukhin2020dpr,khattab2020colbert}
signals through rank fusion~\cite{cormack2009rrf} and is the standard production
stack. To handle heterogeneity, RouterRetriever~\cite{lee2025router} routes a
query to a domain-expert encoder, and instruction-conditioned
retrievers~\cite{weller2024promptriever,asai2023tart} condition the query
encoding on a natural-language instruction; both improve relevance but provide
\emph{no} guarantee on cross-domain contamination, and a routed or instructed
retriever can itself be wrapped by our control layer. A fast-growing line at the
major IR venues makes retrieval-augmented pipelines \emph{robust} to imperfect
retrieval rather than guaranteeing it: robust fine-tuning against retrieval
defects~\cite{tu2025rbft}, mixture-of-expert dense
retrieval~\cite{sokli2024moe}, and learned query routing across multiple
retrieval-augmented language models~\cite{zhang2025ragrouter} all adapt the model
or the routing to tolerate noise, but none \emph{certifies} a bounded
contamination rate, and each is composable with the certified layer we propose.
Domain adaptation for dense retrieval~\cite{izacard2022contriever,wang2022gpl} and
concept-erasure methods~\cite{ravfogel2020inlp,belrose2023leace} reshape the
embedding space but likewise do not certify a contamination rate. C3R is
complementary: it is a thin, certified layer that sits on top of any such stack.

\paragraph{Positioning within recent IR-venue work.}
The threads above span the venues most relevant to this paper. From the retrieval
side, the production hybrid stack (sparse-plus-dense with rank
fusion~\cite{cormack2009rrf} and cross-encoder reranking~\cite{nogueira2019passage})
and its domain-adaptation~\cite{wang2022gpl} and
routing~\cite{lee2025router,zhang2025ragrouter} variants are SIGIR/CIKM/WWW
staples, and the most recent reliability work, robust fine-tuning against retrieval
defects~\cite{tu2025rbft} (SIGIR 2025) and mixture-of-expert dense
retrieval~\cite{sokli2024moe}, targets exactly the imperfect-retrieval regime we
operate in, but optimizes the \emph{model} rather than \emph{certifying} an
outcome. From the statistics side, the conditional-coverage and risk-control
lineage~\cite{vovk2012conditional,gibbs2025conditional,bairaktari2025kandinsky,zhang2024posterior,bhattacharyya2024groupweighted,angelopoulos2024conformal,angelopoulos2021learn}
gives validity over \emph{observed} groups, and its retrieval-facing
instances~\cite{kang2024crag,conformalcontext2025,ccra2025} either certify
generation risk or assume the group is known at test time. C3R is, to our
knowledge, the first to bring distribution-free \emph{risk} control to a
\emph{latent} retrieval group with heterogeneous budgets, and it composes with,
rather than competes against, the relevance-improving methods above: any of them
can serve as the frozen stack underneath the certified layer.

%% file: sec/3_setup.tex
\section{Problem Setup and Benchmark}
\label{sec:setup}

\paragraph{Domains and contamination.}
Let $\Dom=\{1,\dots,C\}$ be a set of \emph{domains}. Every document $d$ and
every query $q$ has a true domain label $y(d),y(q)\in\Dom$. In our benchmark a
domain is the document's source corpus - a real, objective ground-truth label,
not a stand-in for some hidden taxonomy. For a query $q$ with retrieved
top-$K$ list $R_K(q)$, the Cross-Domain Contamination Rate is
\begin{equation}
\cscr@K(q)=\frac{1}{|R_K(q)|}\sum_{d\in R_K(q)}\mathbf{1}\{y(d)\neq y(q)\},
\label{eq:cscr}
\end{equation}
the fraction of returned results from a domain other than the query's. It is
bounded in $[0,1]$ and, crucially, is measured against \emph{true} document
provenance, independent of any model's prediction. The per-domain contamination
risk is $\E[\cscr@K(q)\mid y(q)=\dom]$.

\paragraph{The label-free constraint.}
At calibration time we have a held-out set of queries with known domain labels.
At \emph{inference} time the query's domain $y(q)$ is unknown; we only have an
estimate from a domain probe (\Cref{sec:method}). A method is \emph{label-free}
if it never uses $y(q)$ to serve a query. The marginal alternative - one
threshold for all queries - is label-free but, as we show, cannot honor a
per-domain budget.

\paragraph{BEIR-MIX (Mix-A).}
We assemble our main open multi-domain testbed - \textbf{BEIR-MIX}, also called
\textbf{Mix-A} in the generalization study (\Cref{tab:generalize}) - by pooling
four BEIR datasets~\cite{thakur2021beir} whose corpora are disjoint in subject
matter:
FiQA (finance)~\cite{maia2018fiqa}, TREC-COVID (biomedical)~\cite{voorhees2021treccovid},
SciFact (scientific claims)~\cite{wadden2020scifact}, and
NFCorpus (nutrition/medical)~\cite{boteva2016nfcorpus}. The pooled corpus has $237{,}786$ documents and
$1{,}321$ test queries that carry their dataset's own relevance judgments; a
query's domain is its source dataset and a document's domain is its source
corpus. We use each dataset's official qrels for Recall@10/nDCG@10/MRR and the
provenance label for \Cref{eq:cscr}; no relevance or domain label is
model-generated. This makes contamination a genuine, checkable phenomenon
rather than an artifact: a FiQA query that retrieves a TREC-COVID passage is
unambiguously cross-domain. We treat BEIR-MIX as a reproducibility testbed, not
a contribution in novelty. All four testbeds are summarized in \Cref{tab:testbeds}. For the generalization study (\Cref{sec:exp},
\Cref{tab:generalize}) we assemble \emph{three} further pools. Two share this
exact protocol: \textbf{Mix-D}, a controlled high-contamination pool (NFCorpus,
SciFact, SciDocs~\cite{cohan2020specter}, FiQA; $92$K docs), and \textbf{Mix-C}, four StackExchange
technical forums from CQADupStack~\cite{hoogeveen2015cqadupstack} (\textsc{programmers}, \textsc{unix},
\textsc{gaming}, \textsc{english}; $165$K docs) that are \emph{topically} adjacent
(all technical Q\&A) yet \emph{low in genuine subject overlap} - the distinction
that drives our contamination finding. The third, \textsc{Sector-Bench}
(\Cref{sec:exp}), applies the same provenance-labelling principle to public US
federal regulation (eCFR), with domain $=$ CFR Title.

%% file: sec/4_theory.tex
\section{Two-Split Latent-Group Risk Control}
\label{sec:theory}

Conceptually we build on conformal risk control~\cite{angelopoulos2024conformal};
the finite-sample tool is the PAC-style ($1-\delta$) risk-controlling prediction
set framework~\cite{bates2021rcps}, which is the form \Cref{thm:main} takes.
We now state the guarantee. The retriever produces, for each query, a domain
posterior $\post(\cdot\mid q)$ and an inferred domain
$\ghat(q)=\argmax_{\dom}\post(\dom\mid q)$ (with an \textsc{ambig} bucket when
the posterior is diffuse, \Cref{sec:method}). A control parameter $\tau$ governs
how aggressively wrong-domain documents are demoted: a candidate $d$ is demoted
when $\mis(q,d)>\tau$ (\Cref{sec:method}), and the served loss is
$\ell_q(\tau)=\frac{1}{|R_K(q)|}\sum_{d\in R_K(q;\tau)}\mathbf{1}\{y(d)\neq y(q)\}$.
RCPS requires the calibration loss to be monotone in $\tau$. The raw served loss
is \emph{not} guaranteed monotone - in top-$K$ retrieval, admitting one more
candidate can evict a wrong-domain item as easily as add one - so we calibrate
against its running-maximum envelope $\tilde\ell_q(\tau)=\max_{\tau'\le\tau}
\ell_q(\tau')$, which is monotone by construction and upper-bounds $\ell_q$
pointwise. Certifying $\tilde\ell$ therefore certifies the true loss (the bound is
conservative, never anti-conservative); all reported losses use this envelope.
(Recent work gives RCPS-style guarantees for \emph{generic non-monotonic} losses
directly~\cite{angelopoulos2026nonmono}; we take the simpler conservative-envelope
route and quantify its looseness in \Cref{sec:exp}.)
For each inferred-domain group $g$ we choose a threshold $\tau_g$; the served loss
of a query routed to $g$ is $\ell_q(\tau_g)$.

\paragraph{The two splits.}
We split the labeled calibration pool into two disjoint sets,
$\Done$ and $\Dtwo$ (\Cref{fig:splits}); the domain probe is trained on a third,
disjoint split.
\begin{itemize}[leftmargin=1.2em]
\item On $\Done$ we estimate, for each domain $\dom$, the router's
\emph{misrouting} rate $\rho_\dom=\Pr(\ghat(q)\neq \dom\mid y(q)=\dom)$ and its
\emph{group impurity} $\pi_\dom=\Pr(y(q)\neq \dom\mid \ghat(q)=\dom)$, and form
one-sided Clopper-Pearson upper bounds $\rhob_\dom,\pib_\dom$.
\item On $\Dtwo$ we run conformal risk control \emph{within each inferred-domain
group} to pick the largest $\tau_g$ whose group risk is certified $\le$ its
budget.
\end{itemize}

\begin{assumption}[Exchangeability]\label{ass:exch}
The calibration and test queries are exchangeable draws from a single fixed query
pool (i.i.d.\ sampling is the special case we use). Conditioning on the value of a
\emph{fixed measurable function} preserves exchangeability within the conditioned
subset; since the inferred domain $\ghat=\ghat(q)$ is such a function under a
frozen probe (trained on the disjoint split of \Cref{ass:indep}), the calibration
and test queries with $\ghat=\dom$ are exchangeable for every $\dom$. This is the
level at which the $\Dtwo$ RCPS step certifies risk; conditioning on a true domain
$\{y=\dom\}$ preserves exchangeability likewise, which is what the decomposition in
\Cref{thm:main} uses.
\end{assumption}
\begin{assumption}[Bounded loss]\label{ass:bdd}
$\ell_q(\tau)\in[0,B]$ with $B\le 1$.
\end{assumption}
\begin{assumption}[Split independence]\label{ass:indep}
$\Done$ and $\Dtwo$ are independent, and the probe is trained on data disjoint
from both.
\end{assumption}

\begin{theorem}[Per-domain transfer certificate]\label{thm:main}
Run group-$\dom$ RCPS~\cite{bates2021rcps} on $\Dtwo$ at level $\alpha_\dom$ with
per-domain confidence $1-\delta_\dom$, so that the inferred-group risk on the
monotone envelope is certified $\E[\tilde\ell\mid\ghat(q)=\dom]\le\alpha_\dom$;
since $\ell\le\tilde\ell$ pointwise, the same bound holds for the true loss
$\E[\ell\mid\ghat(q)=\dom]\le\alpha_\dom$ (all displays below are in $\ell$). Assume each inferred group is
non-empty, its impurity bound satisfies $\pib_\dom<1$, and the budget lies in the
non-vacuous regime $\alpha_\dom\le B(1-\pib_\dom)$ (otherwise the certificate for
$\dom$ is the trivial bound $B$, and the method abstains). Under
\Cref{ass:exch,ass:bdd,ass:indep}, with probability at least $1-\gamma$ over the
draw of $\Done\cup\Dtwo$, simultaneously for every domain $\dom$,
\begin{equation}
\E\!\big[\cscr@K(q)\mid y(q)=\dom\big]\;\le\;
(1-\rhob_\dom)\,\frac{\alpha_\dom}{1-\pib_\dom}\;+\;\rhob_\dom\,B .
\label{eq:cert}
\end{equation}
\end{theorem}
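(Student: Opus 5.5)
The proof decomposes the true-domain risk by the router's output and bounds each piece on a high-probability event. Fix a domain $\dom$ and write $\ell=\ell_q(\tau_{\ghat(q)})$ for the served loss, which is exactly $\cscr@K(q)$ at the deployed threshold. By the law of total expectation over the event $\{\ghat(q)=\dom\}$,
\begin{equation*}
\E[\ell\mid y=\dom]=\Pr(\ghat=\dom\mid y=\dom)\,\E[\ell\mid y=\dom,\ghat=\dom]+\rho_\dom\,\E[\ell\mid y=\dom,\ghat\neq\dom].
\end{equation*}
The second conditional expectation is at most $B$ by \Cref{ass:bdd}. The right-hand side is then an affine function of the weight $\rho_\dom$, interpolating between $\E[\ell\mid y=\dom,\ghat=\dom]$ and $B$.

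\textbf{Correctly routed term.} We bound $\E[\ell\mid y=\dom,\ghat=\dom]$ by the inferred-group risk. Decompose $\E[\ell\mid\ghat=\dom]$ over $\{y=\dom\}$ versus $\{y\neq\dom\}$:
\begin{equation*}
\E[\ell\mid\ghat=\dom]=(1-\pi_\dom)\,\E[\ell\mid\ghat=\dom,y=\dom]+\pi_\dom\,\E[\ell\mid\ghat=\dom,y\neq\dom].
\end{equation*}
The impure term is nonnegative, so
\begin{equation*}
\E[\ell\mid\ghat=\dom,y=\dom]\le\frac{\E[\ell\mid\ghat=\dom]}{1-\pi_\dom}\le\frac{\alpha_\dom}{1-\pib_\dom}.
\end{equation*}
The second inequality uses two facts: the RCPS certificate on $\Dtwo$, applied to the envelope $\tilde\ell\ge\ell$ and valid by \Cref{ass:exch} within the group $\{\ghat=\dom\}$; and $\pi_\dom\le\pib_\dom<1$.

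\textbf{Combining.} Set $u=\alpha_\dom/(1-\pib_\dom)$. The non-vacuous regime $\alpha_\dom\le B(1-\pib_\dom)$ gives $u\le B$. Hence
\begin{equation*}
(1-\rho_\dom)\,\E[\ell\mid y=\dom,\ghat=\dom]+\rho_\dom B\le(1-\rho_\dom)u+\rho_\dom B,
\end{equation*}
and the right-hand side is nondecreasing in $\rho_\dom$ because $u\le B$. Replacing $\rho_\dom$ by $\rhob_\dom\ge\rho_\dom$ therefore yields \eqref{eq:cert}. This monotonicity step is exactly where the regime condition is needed. It must be applied as stated: naively bounding the first weight $(1-\rho_\dom)$ by $1$ would lose the claimed form.

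\textbf{Probability bookkeeping.} This is the main obstacle. The bound requires three families of events to hold simultaneously for all $\dom$:
\begin{itemize}[leftmargin=1.2em]
\item the $C$ upper bounds $\rho_\dom\le\rhob_\dom$;
\item the $C$ upper bounds $\pi_\dom\le\pib_\dom$, both computed by Clopper--Pearson on $\Done$;
\item the $C$ RCPS certificates on $\Dtwo$ at confidences $1-\delta_\dom$.
\end{itemize}
I would allocate the Clopper--Pearson levels, for example $\gamma_\rho/C$ and $\gamma_\pi/C$ per domain. A union bound then gives total failure probability at most $\gamma=\gamma_\rho+\gamma_\pi+\sum_\dom\delta_\dom$.

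Two subtleties need checking. First, the RCPS step conditions on group membership with random group sizes, so I would condition on the multiset of $\ghat$-values in $\Dtwo$; the frozen probe of \Cref{ass:indep} makes $\ghat$ a fixed function, and non-emptiness of each group is assumed. Second, the thresholds $\tau_g$ depend only on $\Dtwo$ while $\rhob,\pib$ depend only on $\Done$. The router rates $\rho_\dom,\pi_\dom$ are population quantities of the frozen probe and do not depend on $\tau$. Consequently the two event families concern independent data (\Cref{ass:indep}) and their conjunction is controlled by the union bound without any data reuse.
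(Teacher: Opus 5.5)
Your proof is correct and follows essentially the same route as the paper's: you decompose the true-domain risk on the router outcome and bound the misrouted mass by $B$, you bound the correctly-routed term by $\alpha_\dom/(1-\pi_\dom)$ by dropping the nonnegative impure part of $\E[\ell\mid\ghat=\dom]$, and you then plug in the Clopper--Pearson upper bounds by monotonicity and control the $\Done$ and $\Dtwo$ events jointly by a union bound. The differences are cosmetic: you substitute $\pib_\dom$ before $\rhob_\dom$, which lets you use the stated regime $\alpha_\dom\le B(1-\pib_\dom)$ directly for monotonicity in $\rho_\dom$ (the paper states the condition as $B\ge\alpha_\dom/(1-\pi_\dom)$, which that regime implies on the good event), and your $3C$-statement allocation omits the \textsc{ambig} risk test counted in the paper's $3C{+}1$ Bonferroni split, which is harmless since that group enters no domain's bound.
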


\paragraph{Intuition.}
Decompose the true-domain risk by whether the router was correct. The
correctly-routed mass (probability $1-\rho_\dom$) is served the group-$\dom$
threshold, whose risk \emph{among truly-$\dom$ queries} is the group risk
$\alpha_\dom$ inflated by the group's impurity, $\le\alpha_\dom/(1-\pi_\dom)$.
The misrouted mass (probability $\rho_\dom$) is served some other group's
threshold and we bound its loss crudely by $B$. Plugging the Clopper-Pearson
\emph{upper} bounds $\rhob_\dom,\pib_\dom$ keeps the inequality valid because the
right-hand side is monotone in both. The full proof, and the verification that
each use of \Cref{ass:indep} is what removes circularity (the router errors are
estimated on data disjoint from the threshold calibration), are in
\Cref{app:proof}.

\paragraph{Why the slack is real, not vacuous.}
Every term in \Cref{eq:cert} is computed from data, so the certificate is
honest about a weak router: a poor probe yields a large, \emph{visible}
$\rhob_\dom$ (and a wide certificate) rather than a silent violation. We make
this diagnostic explicit in \Cref{sec:exp} (\Cref{fig:frontier}).

\paragraph{Simultaneity and finite-sample bounds.}
The guarantee holds simultaneously over all $2C$ binomial bounds (one $\rhob$
and one $\pib$ per domain) and the $C{+}1$ group risk-control statements (each at
confidence $1-\delta_\dom$). We allocate the global error $\gamma$ across these
$3C{+}1$ statements by a Bonferroni split (level $\gamma/(3C{+}1)$ each); a union
bound then yields the simultaneous $1-\gamma$ guarantee under \emph{any} dependence
among them. This Bonferroni allocation is both what we state the guarantee with and
what every reported certificate uses, because it is unconditionally valid for this
mixed family of confidence bounds and risk tests. A \v{S}id\'ak allocation is
slightly less conservative but assumes independence across the statements; the
ablation confirms the choice is second order, moving the NFCorpus certificate from
$0.709$ (Bonferroni) to $0.693$ (\v{S}id\'ak) (\Cref{tab:ablation}). We use exact Clopper-Pearson intervals for the
binomial terms (conservative and valid at any sample size) and the
Hoeffding-Bentkus RCPS bound~\cite{bates2021rcps} to certify each group risk
$\E[\ell\mid\ghat=\dom]\le\alpha_\dom$ (the HB $p$-value is given in
\Cref{app:details}); the latter is far tighter than the
additive Hoeffding margin when the observed group loss is well below its budget,
which is the regime we operate in (\Cref{sec:exp}).

\paragraph{Budget-rescaling corollary.}
To make the \emph{final} per-domain certificate equal a target $\alpha^\star_\dom$,
invert \Cref{eq:cert}: run group-$\dom$ control at
\begin{equation}
\alpha_\dom=\big(\alpha^\star_\dom-\rhob_\dom B\big)\,
\frac{1-\pib_\dom}{1-\rhob_\dom},
\label{eq:rescale}
\end{equation}
and abstain (return nothing for that domain) when the right-hand side is
non-positive - i.e.\ when the router is too weak for the requested budget. This
gives a clean sample-complexity reading: certifying a target $\alpha$ needs the
group risk-control step to be feasible at the rescaled level, which by the
Hoeffding-Bentkus bound requires roughly
$n_{\min}\approx\ln(e/\delta)/\alpha$ calibration queries in the group, where
$\delta$ is the per-statement error. We confirm this onset empirically in
\Cref{sec:exp}.

%% file: sec/5_method.tex
\section{The C3R Control Layer}
\label{sec:method}

C3R realizes \Cref{thm:main} as a thin layer over a \emph{frozen} retrieval
stack (\Cref{fig:hero}b). The base stack - BM25, a frozen dense encoder, RRF
fusion, and a cross-encoder reranker - is never retrained; C3R adds one small
learned component and a calibration procedure.

\paragraph{Domain probe.}
A lightweight probe maps a frozen query embedding to a domain posterior
$\post(\cdot\mid q)$, temperature-scaled on held-out data. We train the probe on
\emph{query-style} text - retrieval queries and document titles - rather than
document bodies, because the document-to-query distribution shift otherwise
inflates the router errors; \Cref{sec:exp} reports the difference. The probe
also scores documents offline, giving $\post(\cdot\mid d)$. We route
$\ghat(q)=\argmax_\dom\post(\dom\mid q)$ when the posterior is confident
($\max_\dom\post(\dom\mid q)\ge u$) and to an \textsc{ambig} group otherwise.
The \textsc{ambig} group is a safety mechanism, not a certified sector: it is
calibrated at the strictest budget $\min_\dom\alpha_\dom$ so an ambiguous query
is demoted at least as hard as any true domain would be, but it carries no
per-domain certificate of its own and does not enter any sector's bound. A
domain whose calibration group is empty or below the sample-complexity onset
$n_{\min}$ (\Cref{sec:theory}) is infeasible at its requested budget and the
method abstains for it (returns nothing) rather than issue an uncertified
result.

\paragraph{Mismatch score and soft demotion.}
The control statistic is the posterior overlap mismatch
\begin{equation}
\mis(q,d)=1-\sum_{\dom\in\Dom}\post(\dom\mid q)\,\post(\dom\mid d)\in[0,1],
\end{equation}
and the layer \emph{demotes} (excludes from the served list) any candidate with
$\mis(q,d)>\tau_{\ghat(q)}$, where $\tau$ comes from calibration
(\Cref{alg:c3r}). Demotion is soft in the sense that the threshold is a continuous knob on a
continuous statistic, and this strictly generalizes hard label filtering.

\begin{lemma}[Soft demotion breaks the classifier kept-set-error floor]\label{lem:floor}
Fix the router. Hard label filtering applies the parameter-free rule ``keep $d$ iff
$\argmax_\dom\post(\dom\mid d)=\ghat(q)$'', so at the inferred-group level its served
contamination $\E[\ell\mid\ghat{=}\dom]$ equals a fixed positive constant, the
document classifier's error rate on the documents it keeps, with no parameter that
can lower it. Soft demotion thresholds the continuous statistic $\mis(q,d)$, and its
inferred-group risk on the monotone envelope $\E[\tilde\ell\mid\ghat{=}\dom]$ is
non-increasing as $\tau$ decreases: it is \emph{certifiable} at any positive budget
feasible at the group sample size, and \emph{realizes} $0$ in the empty-served-set
limit. Soft demotion therefore removes the hard filter's classifier kept-set-error
floor. It does \emph{not} remove the router-error floor $\rhob_\dom B$ that the
delivered certificate \eqref{eq:cert} adds, and which both policies share.
\end{lemma}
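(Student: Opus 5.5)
The plan is to prove the four claims of \Cref{lem:floor} in order: the hard-filter floor, monotonicity of the soft envelope, certifiability and the zero limit, and the shared router floor. Throughout, the router $\ghat$ is fixed. That lets us condition on the event $\{\ghat(q)=\dom\}$ and treat the queries there as an exchangeable subpopulation, as in \Cref{ass:exch}.

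\emph{Step 1 (hard-filter floor).} Under hard filtering the kept set for a query routed to $\dom$ is $\{d\in R_K(q):\argmax_{c'}\post(c'\mid d)=\dom\}$. This rule has no tunable parameter. Its served loss $\ell_q$ counts the kept documents with $y(d)\neq y(q)$. I would split this count into two parts according to whether $y(q)=\dom$. When $y(q)=\dom$, the counted documents are exactly those predicted $\dom$ whose true label is not $\dom$. Averaging over $\ghat=\dom$ therefore gives a fixed constant, and that constant is at least the document classifier's kept-set error rate among documents predicted $\dom$. It is strictly positive whenever the classifier misclassifies a nonnegligible mass of retrieved documents into $\dom$, which I would state as the operative assumption ("positive" in the lemma). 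Since the rule is fixed, nothing can lower this constant.

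\emph{Step 2 (monotonicity of soft demotion).} The served set $R_K(q;\tau)$ keeps $d$ iff $\mis(q,d)\le\tau$. The raw loss $\ell_q(\tau)$ need not be monotone in $\tau$, but the envelope $\tilde\ell_q(\tau)=\max_{\tau'\le\tau}\ell_q(\tau')$ is non-decreasing in $\tau$ by definition. Taking conditional expectations preserves this, so $\E[\tilde\ell\mid\ghat=\dom]$ is non-increasing as $\tau$ decreases.

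\emph{Step 3 (certifiability and zero limit).} Take $\tau<\min_{q,d}\mis(q,d)$, or $\tau<0$ since $\mis\ge0$. Then every candidate is demoted, the served set is empty, and $\ell_q(\tau)=0$ under the convention that an empty list contributes no contamination. The envelope at such $\tau$ is a max over smaller $\tau'$, all of which also give empty sets, so $\tilde\ell_q(\tau)=0$ and the $0$ limit is realized. Certifiability then follows from Hoeffding--Bentkus RCPS on $\Dtwo$ restricted to $\ghat=\dom$. At any budget $\alpha>0$ with group size at least $n_{\min}\approx\ln(e/\delta)/\alpha$, the empirical risk $0$ at that $\tau$ yields an HB $p$-value below $\delta$. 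Hence the RCPS scan (monotone by Step 2) returns a valid $\tau_\dom$ with $\E[\tilde\ell\mid\ghat=\dom]\le\alpha$. This is an arbitrary positive budget, below the hard-filter constant of Step 1, so the floor is removed.

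\emph{Step 4 (router floor persists).} Both policies feed their inferred-group risk into \Cref{thm:main}. The term $\rhob_\dom B$ in \eqref{eq:cert} depends only on the router estimated on $\Done$, not on the demotion rule. Even with $\alpha_\dom\to0$, the right-hand side of \eqref{eq:cert} therefore stays at least $\rhob_\dom B$.

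The main obstacle is Step 1. Turning "equals a fixed positive constant, the classifier's error on kept documents" into an exact identity requires handling the impure part of the group ($y(q)\neq\dom$ but $\ghat=\dom$) and the ratio normalization by $|R_K(q)|$. I would first try stating it as a lower bound, at least $(1-\pi_\dom)$ times the kept-set error, which suffices for the floor claim. I would then remark that under the lemma's informal reading the two coincide.
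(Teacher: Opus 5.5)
Your proposal is correct and follows the paper's proof step for step. The hard rule has no free parameter, so its group loss is fixed; the running-maximum envelope gives monotonicity; the empty served set gives $\tilde\ell=0$; the Hoeffding--Bentkus step certifies any positive budget feasible at the group size (your explicit $e(1-\alpha)^n\le\delta$ computation is the paper's $n_{\min}$ argument, and the paper adds that $\alpha=0$ itself is never certifiable because the $p$-value there is $1$); and the $\tau$-independent term $\rhob_\dom B$ persists.

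The one difference is in Step 1, where your version is more careful than the paper's. The paper simply asserts that the hard filter's group loss \emph{equals} the kept-set error. Your $(1-\pi_\dom)$-weighted lower bound is the statement that actually survives the impure part of the group, and it suffices for the floor claim. Your intermediate unweighted ``at least the kept-set error'' in Step 1 does not follow, because impure queries can contribute less.
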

\begin{proof}
The hard rule has no free parameter, so its inferred-group served loss is fixed at
the classifier's kept-set error. For soft demotion the certified object is the
monotone envelope $\tilde\ell(\tau)$, non-decreasing in $\tau$ (\Cref{sec:theory});
decreasing $\tau$ only removes served candidates, and at the most aggressive
threshold the served set is empty, giving $\tilde\ell=0$ under the convention
$0/0{:=}0$. The Hoeffding-Bentkus step certifies any budget $\alpha>0$ feasible at
the group sample size but cannot certify $\alpha{=}0$ (its $p$-value at zero observed
loss is $1$), so soft demotion certifies inferred-group contamination arbitrarily
close to, but not equal to, $0$, hence strictly below the positive classifier floor.
Substituting into \eqref{eq:cert}, the delivered per-domain certificate still carries
the $\tau$-independent term $\rhob_\dom B$: this router floor is shared with hard
filtering and is a different object from the group-level served loss above.
\end{proof}
Two clarifications. First, the group-level $\tilde\ell{=}0$ above is the served loss
of the \emph{empty set}; it is not in tension with the per-domain \emph{certificate}
convention, under which an infeasible budget makes \eqref{eq:rescale} non-positive
and C3R \emph{abstains}, reporting the vacuous certificate $B{=}1$ for that domain
(it delivers nothing). Second, hard filtering thresholds a \emph{coarsened} (binary)
domain-agreement statistic, of which $\mis$ is the continuous refinement, so the hard
rule is the degenerate member of the soft threshold family, which is the sense in
which soft demotion strictly generalizes it. We confirm the empirical consequence in
\Cref{sec:exp}.

\paragraph{Honesty under a weak probe.}
Because the certificate \eqref{eq:cert} carries $\rhob,\pib$ explicitly, a poor
probe cannot cause a \emph{silent} violation: it widens the certificate, and at
the extreme drives \eqref{eq:rescale} non-positive and forces honest abstention.
This is the safety property a contamination guarantee must have, and it is what
distinguishes a certified layer from an uncertified filter.

\begin{figure}[t]
\centering
\fbox{\parbox{0.95\columnwidth}{\small
\textbf{Algorithm 1: C3R calibration and serving}\\[2pt]
\emph{Calibrate (offline)}: split the labeled pool into independent
$\Done,\Dtwo$. For each domain $\dom$: (i) form Clopper-Pearson upper bounds
$\rhob_\dom,\pib_\dom$ on $\Done$; (ii) rescale the target budget
$\alpha^\star_\dom$ to $\alpha_\dom$ via \eqref{eq:rescale}; (iii) set $\tau_\dom$
to the largest threshold whose Hoeffding-Bentkus RCPS risk is $\le\alpha_\dom$
on the $\Dtwo$ group-$\dom$ losses, or \textsc{abstain} if infeasible.\\[2pt]
\emph{Serve (per query $q$)}: run the frozen stack to get candidates; set
$g=\ghat(q)$; demote every candidate with $\mis(q,d)>\tau_g$; emit the top-$K$
list and the certificate \eqref{eq:cert}.
}}
\label{alg:c3r}
\end{figure}

%% file: sec/6_experiments.tex
\section{Experiments}
\label{sec:exp}

\paragraph{Setup.}
The base stack is BM25 ($k_1{=}0.9,b{=}0.4$), a frozen BGE-base
encoder~\cite{xiao2023bge}, RRF fusion ($k{=}60$), and an MS-MARCO
cross-encoder reranker; we also test a stronger Qwen3-Reranker-8B. The domain
probe is a temperature-scaled logistic head on frozen embeddings (an MLP variant
appears in ablations). Calibration uses $1000$ resampled splits: per domain we
hold out $30\%$ of queries for evaluation and split the remaining $70\%$ evenly
into $\Done$ (router-error bounds $\rhob,\pib$) and $\Dtwo$ (threshold), with the
probe trained on a separate disjoint pool; all reported metrics are means over
the $1000$ resamples. We use a global
$\gamma{=}0.1$; budgets are heterogeneous, with the most-contaminated domain held to the strictest budget
($\alpha^\star{=}0.05$ for the SciFact analog, $0.15$ elsewhere); \Cref{tab:validity}
thus already exercises genuinely heterogeneous per-domain budgets. We set $\gamma{=}0.1$ and split it across the $3C{+}1$ statements by Bonferroni; the
results are insensitive to this choice. All reported numbers trace to released
result files; every retrieval number uses the datasets' own qrels and every
contamination number uses true provenance.

\paragraph{Contamination is real and asymmetric.}
The uncontrolled stack has per-domain CSCR@10 of $0.014$ (FiQA), $0.006$
(TREC-COVID), $0.630$ (SciFact), and $0.559$ (NFCorpus), with marginal average
$0.287$ (\Cref{fig:hero}a). A marginal budget at $0.287$ would leave SciFact at
more than twice its target, which is precisely the failure a per-domain
certificate must prevent.

\paragraph{C1: the certificate is valid; marginal control is not
(\Cref{tab:validity}, \Cref{fig:sim}).}
On BEIR-MIX, C3R incurs \emph{zero} certificate violations on every domain
across all $1000$ resampled calibrations, while marginal conformal risk control
violates SciFact's budget in $100\%$ of draws and NFCorpus's in $53\%$ (at full
calibration data; the violation rises with $n$ as the threshold sharpens at the
wrong target); the oracle that uses test-time labels also never violates,
confirming the harness.
We are careful about what this number is: the $1000$ draws resample
\emph{calibration splits of the same data}, so $0/1000$ is a \emph{stability}
result - the procedure does not violate across splits (a $0/1000$ rate has a
$95\%$ Clopper-Pearson upper bound of $0.004$) - not a fresh estimate of
population coverage. Population validity rests on the finite-sample theory
(\Cref{thm:main}) and the controlled simulation below; the $0/1000$ and
$1000/1000$ contrast with marginal control is the \emph{relative} evidence that
the per-domain scheme is calibrated where the marginal one is not. The Recall@10 means in
\Cref{fig:pareto} carry $95\%$ resampling confidence intervals of half-width
$\le 0.009$ across all points, so the soft-vs-cascade gaps (e.g.\ $0.10$ at
$\alpha{=}0.4$ on SciFact, with intervals of half-width $0.004$) exceed their
uncertainty by more than an order of magnitude.
We are precise about what this validity does and does not claim, and
\Cref{tab:validity} unpacks it per domain. A certificate is a guarantee, not a
free lunch: the misrouted-mass term $\rhob_\dom B$ is a hard floor on the
achievable budget. When that floor exceeds the requested budget - SciFact's
$\rhob B{=}0.23$ against its strict $\alpha^\star{=}0.05$, or TREC-COVID's
$0.28$ (loose because it has only $50$ test queries) - the rescaled run-budget is
non-positive and C3R \emph{abstains}, issuing the vacuous certificate $1.0$ and
returning nothing rather than violating. Only the clean domain (FiQA) is feasibly
certified at these locked budgets, comfortably meeting the $0.15$ budget (its
actual contamination is just $0.015$, so this is a feasible certificate rather than
a tight one) while retaining $45\%$ Recall@10; NFCorpus is feasible in $34\%$ of
resamples, and its mean certificate of $0.71$ averages those feasible resamples
(certificate near $0.15$) with the $66\%$ that abstain at $1.0$.
The point of the comparison is the last two columns: faced with the same
impossible budgets, C3R \emph{abstains} (and so never violates) while marginal CRC
\emph{commits} and violates SciFact in $100\%$ and NFCorpus in $53\%$ of draws.
Abstention is the correct, safe action - and it is \emph{visible} in $\rhob_\dom$
rather than hidden in a violation. The monotone envelope is not the culprit:
recomputing the raw (non-monotonized) loss, the envelope inflates the certified
loss by only $\le 0.024$ over the raw served loss at the operating thresholds
(FiQA $0.002$, SciFact $0.006$, NFCorpus $0.024$; max $0.10$ over all $\tau$), so
abstention reflects the genuine router floor $\rhob B$, not envelope looseness.
The recall-bearing certified operating points
are the looser, feasible budgets of \Cref{fig:pareto} (C2), where C3R retains up
to $\sim$\,$6\times$ the Recall@10 of the strongest cascade at equal certified
contamination; validity (C1) certifies the bound never lies, and the Pareto (C2)
is where the method earns its keep.
A controlled simulation sweeps router quality and reproduces the picture
(\Cref{fig:sim}): C3R's strict-domain violation rate stays at $0$ across $360{,}000$ trial-domain checks, whereas naive transfer (no $\rhob/\pib$
correction), marginal control, and a posterior-weighted variant all breach the
nominal $\gamma$ as the router degrades. The naive scheme is the most
instructive negative control: with more calibration data it sets a tighter
threshold at the \emph{wrong} target, so its violation rate \emph{rises} toward
$1$. On BEIR-MIX, where the query-style probe is good, naive transfer does not
visibly violate; the simulation is what exhibits its failure regime.

\paragraph{Feasibility, abstention, and a worked budget-rescaling example
(\Cref{tab:feasibility}).}
Because the certificate is honest about router quality, abstention is not an
all-or-nothing event but a measured rate that a deployer can read in advance.
\Cref{tab:feasibility} reports, per domain and testbed, the fraction of resampled
calibrations in which a non-vacuous certificate is issued; its complement is the
abstention rate. The pattern is consistent across all three open testbeds: a
clean-router domain certifies on every resample, a moderate one on a fraction, and
a domain whose router floor exceeds the requested budget abstains throughout. A
worked example makes the mechanism concrete. Fix a target budget $\alpha^\star$ and
read the rescaled run-budget from \eqref{eq:rescale},
$\alpha=(\alpha^\star-\rhob_\dom B)(1-\pib_\dom)/(1-\rhob_\dom)$. On FiQA the
query-style router is clean ($\rhob B{=}0.03$, $\pib{=}0.02$), so at
$\alpha^\star{=}0.15$ the run-budget is $\alpha{=}0.12$: the budget passes through
almost unchanged and is certified on $100\%$ of resamples. On NFCorpus the floor is
higher ($\rhob B{=}0.11$, $\pib{=}0.15$), so the run-budget tightens to
$\alpha{=}0.04$, which the Hoeffding-Bentkus step certifies only when the inferred
group is large enough, here in $34\%$ of resamples; C3R abstains otherwise. On
SciFact at the stricter $\alpha^\star{=}0.05$ the floor $\rhob B{=}0.23$ already
exceeds the target, the rescaled budget is non-positive, and the domain abstains in
every resample. The deployer thus reads servability directly from $\rhob_\dom B$:
the router error is the operational dial, and an empty certified set is a visible
statement that the router is too weak for the request, never a silent failure.

\paragraph{C2: soft demotion retains more at equal certificate
(\Cref{fig:pareto}).}
We compare C3R's soft demotion against the two strongest calibrated
classifier-plus-filter cascades - a hard argmax filter and a stronger
posterior-threshold filter - all run through the \emph{same} two-split
calibration so that only the filter family differs. On the contaminated domain
(SciFact), at equal certified contamination and tight budgets, C3R retains
up to $\sim$\,$6\times$ more Recall@10 than the strongest cascade: $0.12$ vs.\ $0.02$ at
$\alpha{=}0.4$, $0.26$ vs.\ $0.09$ at $\alpha{=}0.5$, and $0.39$ vs.\ $0.26$ at
$\alpha{=}0.6$. This is the direct answer to the concern that the certificate is
vacuous where it matters: on the \emph{most contaminated} domain, at an
achievable budget $\alpha{=}0.4$, C3R issues a \emph{non-vacuous} certificate - it
guarantees SciFact contamination $\le 0.4$, down from its uncontrolled $0.63$,
while retaining recall - whereas at that same budget the hard cascade and
marginal CRC cannot certify at all. The certificate is vacuous only \emph{below}
the router floor $\rhob B$ (the strict $0.05$ regime of \Cref{tab:validity}), not
on the contaminated domains as such. The hard argmax cascade cannot certify the
contaminated domains at \emph{any} threshold - its contamination floor ($0.57$ on
SciFact) sits above the budgets - so it abstains. We are explicit about the scope of this advantage:
on the clean domain (FiQA) all methods tie at $0.45$, and at loose budgets the
gap to the posterior cascade closes. The advantage is real where it matters - %
tight budgets on contaminated domains - and we do not claim it elsewhere. An
\emph{uncertified} domain-classifier filter ($\star$ in \Cref{fig:pareto}) sits
at $0.57$ contamination with no guarantee at all, illustrating why an
uncontrolled filter is not a substitute for a certified layer.

\paragraph{The price of being label-free (oracle ceiling).}
To price the inference-time-label constraint directly, we add an \emph{oracle}
that routes every query by its \emph{true} domain $y(q)$ (no router-error slack,
$\rhob{=}\pib{=}0$) and certifies at the same budgets - the recall C3R could retain
if the probe were perfect. The gap is the honest cost of operating without
query-time labels, and it is small exactly where the probe is good: on the clean
domain (FiQA) C3R matches the oracle to within $0.001$ Recall@10 at every budget,
so being label-free is essentially free there. On the hardest, most contaminated
domain (SciFact) the gap is larger but shrinks as the budget loosens - $0.28$ at
$\alpha{=}0.4$, $0.19$ at $0.5$, $0.11$ at $0.6$ - tracing precisely to the router
error $\rhob$ that the certificate already exposes. The label-free constraint thus
costs little when the domain is separable and degrades gracefully, and visibly,
when it is not.

\paragraph{G1: results generalize across a contamination spectrum
(\Cref{tab:generalize}).}
A single testbed cannot separate a property of C3R from a property of one
dataset pool, so we replicate on two further BEIR-style mixes built to bracket the
contamination regime (a third, regulated pool, \textsc{Sector-Bench}, follows in
G2). The construction itself is a finding: across our four
pools, contamination tracks whether the mix contains a \emph{same-subject}
cluster, not merely topically adjacent domains. A
deliberately \emph{adjacent} pool of four StackExchange technical forums
(\textsc{programmers}, \textsc{unix}, \textsc{gaming}, \textsc{english};
$165$K docs) is nonetheless nearly clean (per-domain CSCR@10 in the narrow band
$0.07$-$0.08$): a competent cross-encoder keeps results on-topic unless two
domains share genuine subject vocabulary. We therefore build a \emph{controlled}
high-contamination pool with a same-subject cluster plus a distinct outlier - %
NFCorpus, SciFact, SciDocs (biomedical/scientific) and FiQA (finance), $92$K
docs. This pool deliberately re-uses three Mix-A corpora (FiQA, SciFact,
NFCorpus) and adds the new SciDocs collection, so that what changes from Mix-A is
the \emph{pool composition}, not the underlying datasets: per-domain CSCR@10 of
$0.012$ (FiQA), $0.019$ (SciDocs), $0.215$ (SciFact), and $0.366$ (NFCorpus) - a
pronounced imbalance (a $30\times$ spread) that echoes Mix-A's asymmetry, though
here with a single dominant contaminated domain rather than two. On this pool both core
results replicate exactly: C3R incurs \emph{zero} certificate violations on every
domain across $1000$ resampled calibrations, and on the contaminated domains soft
demotion dominates the strongest calibrated cascade - at a tight budget
$\alpha{=}0.3$ on SciFact, C3R certifies $0.39$ retention where the posterior
cascade certifies only $0.04$ and the hard cascade abstains entirely. The
certificate is valid at \emph{both} ends of the spectrum, including the
low-contamination forums where it is correctly near-trivial. As an independent
check on the subject-overlap reading, we measure inter-domain overlap directly as
the cosine similarity between domain centroids in the frozen BGE-base embedding space: the
two pools that contaminate most (Mix-A, Mix-D) are exactly the two with the
highest inter-domain centroid similarity ($0.94$), above the adjacent-forum and
low-adjacency pools ($0.86$-$0.90$). With only four constructed pools we present
this as a corroborating observation about \emph{pool composition}, not a law; a
systematic overlap-versus-contamination study is future work. We report per-domain
contamination throughout: the marginal mean understates the problem precisely
because the certificate's job is to protect the contaminated minority of domains
that the mean averages away.

\paragraph{G2: the certificate holds on real regulated-domain text
(\textsc{Sector-Bench}, \Cref{tab:generalize}).}
The regulated settings that motivate this work are not BEIR datasets, so we build
\textsc{Sector-Bench}: a contamination testbed from the public US \emph{Code of
Federal Regulations} (eCFR; a continuously-updated reference, not the official
legal edition), with domain $=$ CFR Title - the genuine, objective provenance of
each regulation, requiring \emph{no} human relevance or appropriateness
annotation. We pool $19$K sections from four CFR Titles ($10$, $18$, $40$, $21$),
which correspond broadly to \emph{energy}, \emph{power and water resources},
\emph{environmental protection}, and \emph{food and drugs}; section headings serve
as queries. Contamination on actual regulation reproduces the asymmetric BEIR-MIX
structure (\Cref{tab:sectordetail} gives the full per-sector breakdown): the two
energy-related Titles are the most contaminated and bleed into
each other (per-domain CSCR@10 $0.36$ for Title~18, $0.21$ for Title~10), while
the environmental ($0.14$) and food/drug ($0.15$) Titles are cleaner. All three
core results carry over. The certificate is valid on every sector (zero violations
over $1000$ resampled calibrations). We are careful about where the soft-demotion
\emph{advantage} shows up, because it is not on the most contaminated sector. The
two genuinely contaminated energy Titles are the hard cases: Title~18 ($0.36$)
abstains at the swept budgets - like SciFact in Mix-A, it needs a looser budget
than we sweep - and Title~10 ($0.21$) shows only a modest edge ($0.40$ vs.\ $0.32$
at $\alpha{=}0.6$). The large gaps ($0.58$ vs.\ $0.10$ on the environmental Title,
$0.54$ vs.\ $0.16$ on food/drug at $\alpha{=}0.4$) are on the \emph{moderately}
contaminated sectors, whose raw CSCR ($0.14$, $0.15$) sits below the $0.4$ budget;
there the budget is not binding and the result really shows that the cascade
\emph{over-filters} under the rescaled run-budget while soft demotion does not.
Both are legitimate - validity transfers to regulated text, and soft demotion
avoids needless over-filtering - but we do not claim a tight win on the
hardest regulated sectors, which behave exactly as their BEIR analogues do.

\paragraph{Does contamination mean \emph{harm}? An appropriateness probe.}
CSCR counts a retrieval as contamination by provenance (wrong CFR Title); whether
that passage carries genuinely \emph{wrong regulatory authority} is a stronger,
content-level question. As a first quantification we sample $180$ cross-Title
retrievals (top result, balanced over the four sectors) plus $48$ same-Title
controls, and have an LLM judge - shown only the query and passage, not their
provenance (gpt-oss-120b; an \emph{automatic proxy} we will human-validate) - label
each \textsc{appropriate} / \textsc{wrong-authority} / \textsc{unrelated}. The
control is a clean check on the judge: same-Title passages are labelled
appropriate $100\%$ of the time. Among cross-Title (contaminating) retrievals,
$17\%$ are judged (by the LLM) \emph{wrong-authority} - citing the
wrong body of regulation ($95\%$ CI $[12,23]\%$, $n{=}180$) - versus $0\%$
within-domain, while $54\%$ are benign
cross-cutting regulation (procedural sections that legitimately span Titles) and
$29\%$ off-topic. Two things follow. First, for \emph{cross-Title} wrong-authority
errors CSCR is a \emph{conservative} proxy: such errors are a subset of
wrong-provenance ones, so a certificate $\cscr@K\le\alpha$ upper-bounds their rate
by $\alpha$. We do not claim CSCR certifies \emph{all} forms of regulatory
inappropriateness - wrong authority could in principle arise within a Title too,
which provenance does not see. Second, this LLM-judged evidence \emph{suggests}
the harm is real and contamination-specific - it appears in $17\%$ of cross-domain
results and \emph{none} of the within-domain controls - but it is an automatic
proxy, not human annotation. A human-annotated appropriateness layer (with
inter-annotator agreement) is the natural next refinement; the labelled proxy set
is released for that validation.

\paragraph{Downstream: contamination propagates to \emph{grounding}, and C3R
reduces it (dose-response).}
We test the end-to-end question directly on the contaminated domain (SciFact).
Three readers spanning capability - Qwen2.5-0.5B/1.5B/7B-Instruct - answer each
query from four contexts of increasing contamination \emph{dose}: clean ($0\%$
wrong-domain passages), C3R-certified ($\alpha{=}0.4$, $42\%$), uncontrolled
($54\%$), and adversarial ($100\%$). The percentages are the realized mean
wrong-domain fraction of the \emph{top-6} passages fed to the reader on this
$60$-query subset; they are not the certified quantity and do not contradict it - %
the $\alpha{=}0.4$ certificate bounds the \emph{expectation} of CSCR@10 (top-$10$)
over all truly-SciFact queries on the full eval split, so a realized top-$6$ mean
of $0.42$ on a subset is consistent with it (and the uncontrolled $54\%$ is
likewise a top-$6$ subset figure, below the $0.63$ top-$10$ CSCR of
\Cref{fig:hero}a for the same reason). A judge, blind to reader and arm, scores two
things: whether the answer \emph{drifts} off the question's scientific topic, and
which passage best supports the answer's main claim - which we map to its
\emph{objective} source domain, so a wrong-domain support is a
\emph{wrong-authority grounding}. The two metrics tell different stories, and the
gap between them is the finding. Topical drift is small and roughly flat
($\le 10\%$ for the 7B at every dose; the 0.5B reaches $23\%$ only under a fully
adversarial context): a capable reader keeps the \emph{topic}. But wrong-authority
grounding rises monotonically with the dose for \emph{every} reader - %
$0\to0.25\to0.35\to1.0$ (7B), $0\to0.50\to0.75\to1.0$ (1.5B), and
$0\to0.22\to0.50\to1.0$ (0.5B). This is exactly the regulated harm a drift check
misses: an answer that stays on-topic yet rests on the wrong body of authority.
And at the certified operating point versus uncontrolled, wrong-authority
grounding falls - by $0.28$ for the 0.5B ($0.50{\to}0.22$) and $0.25$ for the
1.5B ($0.75{\to}0.50$), both with non-overlapping $95\%$ intervals on this
$n{=}60$ subset, and directionally by $0.10$ for the 7B ($0.35{\to}0.25$) though
that interval overlaps. The downstream gain is thus clearest exactly where it
should be - weaker, less-robust readers - and we treat it as \emph{illustrative,
proxy-judged} evidence ($n{=}60$, an LLM judge, no human labels), not a primary
claim. We lead instead with the value that needs no reader at all: the certified
guarantee on the \emph{surfaced evidence}, which protects direct human consumption
and accountability regardless of any downstream model. A human-annotated version
of this probe is future work.

\paragraph{S1: a stronger reranker does not fix contamination
(\Cref{tab:wrap}).}
Swapping the reranker for Qwen3-Reranker-8B raises overall nDCG@10 from $0.406$
to $0.522$, yet contamination is essentially unchanged (SciFact CSCR stays above
$0.4$). Contamination is orthogonal to ranking quality; C3R wraps the stronger
stack unchanged and transfers its quality gains under the same certificate.

\paragraph{S2: contamination is retriever-agnostic, and no released retriever
certifies it (\Cref{tab:baselines}).}
The cascades of \Cref{fig:pareto} are the strongest \emph{certified} comparison;
we also ask whether simply choosing a different or ``domain-aware'' retriever
avoids contamination in the first place. We run five released dense retrievers - %
Contriever~\cite{izacard2022contriever}, GTR~\cite{ni2022gtr},
E5~\cite{wang2022e5}, TAS-B~\cite{hofstatter2021tasb}, and
all-MiniLM (a sentence-transformers checkpoint~\cite{reimers2019sbert}) - plus HyDE
query expansion~\cite{gao2023hyde} (a Qwen2.5-7B-generated hypothetical document
embedded with our frozen encoder), as pure top-$10$ retrieval on Mix-A. Every one
contaminates both biomedical domains in the $0.50$-$0.84$ range. The five plain
dense retrievers post overall CSCR of $0.32$-$0.40$, all \emph{worse} than our
stack's $0.287$; HyDE's LLM query expansion helps marginally on the average
($0.253$) and even posts the highest Recall@10 of any method ($0.448$), yet still
leaves SciFact at $0.56$ and NFCorpus at $0.50$. This confirms on the retriever
side what S1 showed on the reranker side: contamination is not an artifact of one
encoder and does not shrink with ranking quality - the best-ranking method here is
still badly contaminated on the domains that matter. The comparison is not
like-for-like - our stack adds sparse fusion and a reranker the pure-dense
baselines lack - but that is exactly the point: contamination persists across
pipeline strength and retriever family, and none of these systems controls it.
Crucially, none of these methods emits a per-domain contamination guarantee - they are exactly the ``infer
a domain but certify nothing'' systems of \Cref{sec:intro} - whereas C3R attaches a
certified budget to any of them.

\paragraph{S3 / S4: vacuity frontier and sample complexity (\Cref{fig:frontier},
\Cref{fig:nmin}).}
The certificate width as a function of router error $(\rhob,\pib)$ traces a
\emph{vacuity frontier} (\Cref{fig:frontier}): the document-trained probe
($\rho{=}0.45$) lands in the vacuous region, which is exactly why it fails its
gate \emph{visibly}; the query-style probe ($\rho{=}0.13$) lands in the usable
region. This is the honesty property of \Cref{sec:method} made quantitative.
\Cref{fig:nmin} validates the sample-complexity rule: at $\alpha{=}0.5$ the
theoretical onset is $n_{\min}\approx 12$ calibration queries per group, and
empirically feasibility turns on between $n_2{=}12$ and $16$ and is complete by
$24$, with \emph{zero} empirical violations wherever a certificate is issued.

\paragraph{Ablations (\Cref{tab:ablation}).}
Two design choices are load-bearing. Replacing the Hoeffding-Bentkus bound with
the plain Hoeffding margin makes even a feasible domain uncertifiable (its
certificate widens to $1.0$). Dropping the $\rhob/\pib$ correction produces a
\emph{tighter} reported certificate by ignoring router error - an unsafe claim,
and the failure mode the simulation exhibits. The \textsc{ambig} bucket and the
Bonferroni-vs-\v{S}id\'ak allocation are second-order.

\paragraph{Cost (\Cref{tab:runtime}).}
The control layer adds $0.007$\,ms/query of arithmetic (a mismatch score over
$100$ candidates plus a thresholded demotion) on top of the frozen stack (single-threaded CPU, NVIDIA GB10 host), and
all calibration is offline; the certified layer is effectively free at serving
time.

%% file: sec/floats.tex
\begin{figure}[t]\centering
  \begin{subfigure}{0.49\columnwidth}\includegraphics[width=\linewidth]{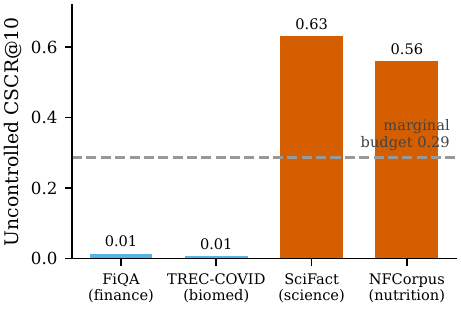}\caption{}\label{fig:asym}\end{subfigure}\hfill
  \begin{subfigure}{0.49\columnwidth}\includegraphics[width=\linewidth]{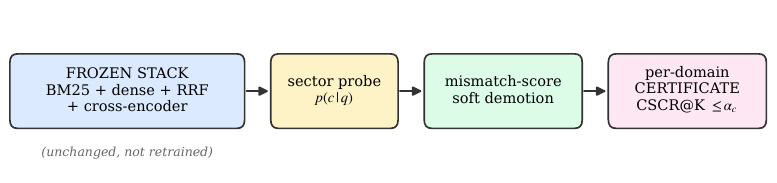}\caption{}\label{fig:pipeline}\end{subfigure}
  \caption{(a) Cross-domain contamination is sharply asymmetric: SciFact (0.63) and NFCorpus (0.56) exceed a marginal budget (0.29, dashed) that FiQA (0.01) satisfies, so a marginal guarantee cannot protect the contaminated domains. (b) C3R is a drop-in layer over a \emph{frozen} stack: an inferred domain posterior drives mismatch-score soft demotion and emits a per-domain certificate, with no query-time label.}
  \label{fig:hero}
\end{figure}
\begin{figure}[t]\centering\includegraphics[width=0.92\columnwidth]{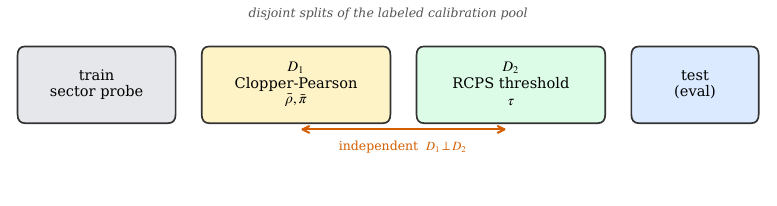}
\caption{Two-split calibration: $\Done$ bounds the router errors $\rhob,\pib$; the independent $\Dtwo$ sets the demotion threshold $\tau$. Independence ($\Done\perp\Dtwo$) makes the transfer bound non-circular.}\label{fig:splits}\end{figure}
\begin{figure}[t]\centering\includegraphics[width=0.86\columnwidth]{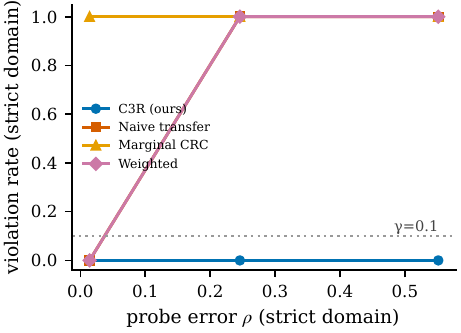}
\caption{Simulation: C3R's strict-domain violation rate stays at $0$ across router quality, while naive transfer, marginal control, and the weighted variant breach the nominal $\gamma{=}0.1$ as the router degrades.}\label{fig:sim}\end{figure}
\begin{figure}[t]\centering\includegraphics[width=\columnwidth]{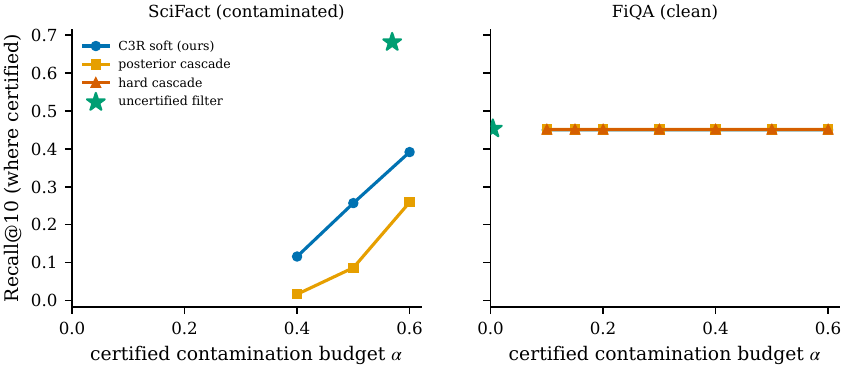}
\caption{Recall@10 retained \emph{where the budget is certified}, vs.\ contamination budget $\alpha$. On the contaminated domain (SciFact) C3R retains $2$-$6\times$ more than the strongest calibrated cascade and the hard cascade cannot certify; on the clean domain (FiQA) all methods tie. The uncertified classifier filter ($\star$) sits at $0.57$ contamination with no guarantee.}\label{fig:pareto}\end{figure}
\begin{figure}[t]\centering\includegraphics[width=0.82\columnwidth]{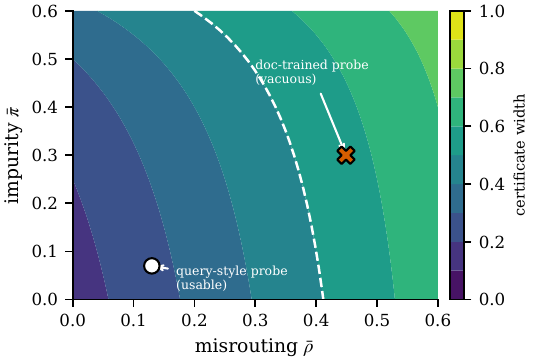}
\caption{Vacuity frontier: certificate width as a function of router error $(\rhob,\pib)$. A weak probe announces itself with a wide (visible) certificate rather than a silent violation; the doc-trained probe ($\times$) is vacuous, the query-style probe ($\circ$) usable.}\label{fig:frontier}\end{figure}
\begin{figure}[t]\centering\includegraphics[width=0.82\columnwidth]{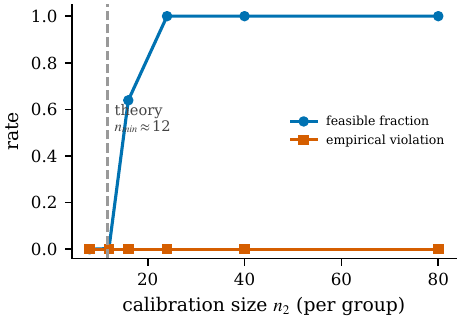}
\caption{Empirical sample complexity: feasibility onset matches the theoretical $n_{\min}\approx\ln(e/\delta)/\alpha\;(\approx 12)$, with zero empirical violations wherever a certificate is issued.}\label{fig:nmin}\end{figure}
\begin{table}[t]\centering\small
\setlength{\tabcolsep}{4.5pt}
\caption{Contamination is retriever-agnostic and uncertified. Five released dense retrievers, run as pure top-$10$ retrieval on the same Mix-A pool, all contaminate the biomedical domains heavily - independent of ranking quality (E5 nearly matches our Recall@10 yet contaminates at $0.33$; TAS-B contaminates most). None emits any contamination guarantee. C3R is a certified layer over \emph{any} such retriever. ``Ours'' is the full hybrid+rerank stack (uncontrolled); $\downarrow$/$\uparrow$ mark better.}
\label{tab:baselines}
\begin{tabular}{@{}lcccc@{}}
\toprule
Retriever & R@10\,$\uparrow$ & CSCR@10\,$\downarrow$ & \multicolumn{2}{c}{worst-domain CSCR\,$\downarrow$} \\
\cmidrule(l){4-5}
 & & (overall) & SciFact & NFCorpus \\
\midrule
BGE+rerank \emph{(ours)} & $0.417$ & $0.287$ & $0.63$ & $0.56$ \\
Contriever & $0.332$ & $0.321$ & $0.62$ & $0.72$ \\
GTR-base   & $0.363$ & $0.316$ & $0.67$ & $0.64$ \\
E5-base    & $0.414$ & $0.326$ & $0.67$ & $0.70$ \\
TAS-B      & $0.243$ & $0.397$ & $0.81$ & $0.84$ \\
all-MiniLM & $0.371$ & $0.326$ & $0.63$ & $0.72$ \\
HyDE \emph{(Qwen2.5-7B)} & \HyR & \HyC & \HySci & \HyNf \\
\midrule
\textbf{+ C3R certified} & \multicolumn{4}{c}{per-domain CSCR $\le\alpha_c$ certified (\Cref{tab:validity}, \Cref{fig:pareto})} \\
\bottomrule
\end{tabular}
\end{table}
\begin{table}[t]\centering\small
\setlength{\tabcolsep}{4pt}
\caption{Generalization across a contamination spectrum, including a
\emph{regulated-domain} testbed. Contamination tracks inter-domain subject
overlap, not topical adjacency. C3R's per-domain certificate is valid (zero
violations over $1000$ resampled calibrations) at \emph{every} point, and soft
demotion dominates the strongest calibrated cascade wherever a domain is
contaminated. All pools are open corpora (BEIR / CQADupStack / public US
\emph{Code of Federal Regulations}); contamination is measured against true
provenance. CSCR ranges rounded to two decimals. The last column reports soft
vs.\ cascade Recall@10 retention at a tight \emph{feasible} budget on a
contaminated domain; for \textsc{Sector-Bench} this is the tightest feasible
sector (environmental), not the highest-CSCR sector (Title~18), which is too
contaminated to certify at the swept budgets (\Cref{sec:exp}).}
\label{tab:generalize}
\begin{tabular}{@{}llccc@{}}
\toprule
Testbed & Domain pool & CSCR@10 & C3R cert. & soft vs.\ cascade \\
 & (4 domains) & range & viol./1000 & (tight feasible dom.) \\
\midrule
Mix-A \emph{(main)} & finance, biomed$\times$3 & $0.01$-$0.63$ & $0/1000$ & $0.39$ vs.\ $0.26$ \\
Mix-D \emph{(ctrl.\ high)} & science$\times$3, finance & $0.01$-$0.37$ & $0/1000$ & $0.39$ vs.\ $0.04$ \\
Mix-C \emph{(ctrl.\ low)} & tech Q\&A forums & $0.07$-$0.08$ & $0/1000$ & - \emph{(clean)} \\
\textsc{Sector-Bench} & CFR Titles 10/18/40/21 & $0.14$-$0.36$ & $0/1000$ & $0.58$ vs.\ $0.10$ \\
\bottomrule
\end{tabular}
\end{table}
\begin{table}[t]\centering\small
\setlength{\tabcolsep}{4pt}
\caption{The four open testbeds (all assembled from public corpora; domain $=$
source). Each query carries its source dataset's own relevance judgments (BEIR /
CQADupStack) or, for \textsc{Sector-Bench}, its section heading; a document's
domain is its objective source provenance, making contamination a checkable
phenomenon rather than a model artifact. Assembly scripts are released.}
\label{tab:testbeds}
\begin{tabular}{@{}lllrr@{}}
\toprule
Testbed & Source & Domains & \#docs & \#queries \\
\midrule
BEIR-MIX (Mix-A) & BEIR & finance, biomed$\times$3 & $237{,}786$ & $1{,}321$ \\
Mix-D & BEIR & science$\times$3, finance & $92{,}111$ & $2{,}271$ \\
Mix-C & CQADupStack & tech Q\&A$\times$4 & $165{,}080$ & $4{,}948$ \\
\textsc{Sector-Bench} & eCFR & energy/power/env./drug & $19{,}174$ & $5{,}744$ \\
\bottomrule
\end{tabular}
\end{table}

\begin{table}[t]\centering\small
\setlength{\tabcolsep}{5pt}
\caption{Feasibility and abstention per domain, at the locked target budgets. The
fraction of resampled calibrations in which the domain is \emph{feasible} (a
certificate tighter than $B{=}1$ is issued); the complement is the abstention rate.
Feasibility tracks the router floor $\rhob_\dom B$: clean-router domains certify on
every resample, moderate ones certify on a fraction, and domains whose floor
exceeds the requested budget abstain throughout. This is the quantity a deployer
reads to decide whether a domain is servable at a given budget.}
\label{tab:feasibility}
\begin{tabular}{@{}llcc@{}}
\toprule
Testbed & Domain & budget $\alpha^\star$ & feasible (non-abstain) \\
\midrule
Mix-A & FiQA & $0.15$ & $100\%$ \\
Mix-A & NFCorpus & $0.15$ & $34\%$ \\
Mix-A & SciFact & $0.05$ & $0\%$ (abstains) \\
Mix-A & TREC-COVID & $0.15$ & $0\%$ (abstains, small sample) \\
\midrule
Mix-D & SciDocs & $0.15$ & $100\%$ \\
Mix-D & FiQA & $0.15$ & $100\%$ \\
Mix-D & NFCorpus & $0.15$ & $56\%$ \\
Mix-D & SciFact & $0.05$ & $0\%$ (abstains) \\
\midrule
\textsc{Sector-Bench} & EPA/environ (Title 40) & $0.40$ & $100\%$ \\
\textsc{Sector-Bench} & FDA/drug (Title 21) & $0.40$ & $100\%$ \\
\textsc{Sector-Bench} & Energy/NRC (Title 10) & $0.40$ & $70\%$ \\
\textsc{Sector-Bench} & FERC/power (Title 18) & $0.40$ & $0\%$ (abstains) \\
\bottomrule
\end{tabular}
\end{table}

\begin{table}[t]\centering\small
\setlength{\tabcolsep}{4pt}
\caption{\textsc{Sector-Bench} per-sector detail (public eCFR; domain $=$ CFR
Title). Uncontrolled top-$10$ contamination (CSCR), feasibility at budget
$\alpha{=}0.4$, and soft-demotion vs.\ strongest-calibrated-cascade Recall@10
retention at \emph{equal certified contamination}. On the two cleaner, fully
feasible sectors soft demotion retains $4$ to $6\times$ the cascade; Energy/NRC is
feasible in $70\%$ of resamples and operates near its feasibility limit, where the
soft curve is steep; FERC/power, the most contaminated sector, abstains at this
budget. Resampling half-widths are $\le 0.01$ (\Cref{sec:exp}).}
\label{tab:sectordetail}
\begin{tabular}{@{}lrrccc@{}}
\toprule
Sector (Title) & \#docs & \#queries & CSCR & feasible & soft vs.\ cascade R@10 \\
\midrule
EPA/environ (40) & $6{,}000$ & $1{,}500$ & $0.14$ & $100\%$ & $0.58$ vs.\ $0.10$ \\
FDA/drug (21)    & $6{,}000$ & $1{,}500$ & $0.15$ & $100\%$ & $0.54$ vs.\ $0.16$ \\
Energy/NRC (10)  & $5{,}159$ & $1{,}500$ & $0.21$ & $70\%$  & $0.06$ vs.\ $0.21$ \\
FERC/power (18)  & $2{,}015$ & $1{,}244$ & $0.36$ & $0\%$   & n/a (abstains) \\
\bottomrule
\end{tabular}
\end{table}

%% file: figures/table1_validity.tex
\begin{table}[t]\centering\small
\setlength{\tabcolsep}{3.5pt}
\caption{Per-domain certificate, fully unpacked (BEIR-MIX, $1000$ resampled
calibrations, $\gamma{=}0.1$, locked budgets). The router floor $\rhob B$ is the
irreducible misrouted-mass term: when it exceeds the target budget $\alpha^\star$
the rescaled run-budget is non-positive, the domain is \emph{infeasible}, and C3R
\emph{abstains} (issues the vacuous certificate $1.0$ and returns nothing) rather
than violate. The key contrast is the last two columns: C3R never violates (its
empirical CSCR stays $\le$ its issued certificate, including by abstaining),
whereas marginal CRC commits to a single budget and violates it on the
contaminated domains. ``-'' marks an abstaining domain.}
\label{tab:validity}
\resizebox{\columnwidth}{!}{%
\begin{tabular}{lccccccccc}
\toprule
Domain & $\alpha^\star$ & $\rhob B$ & resc.\ $\alpha$ & feas. & C3R cert & C3R viol & marg.\ viol & CSCR & R@10 \\
\midrule
FiQA        & $0.15$ & $0.03$ & $0.12$ & $100\%$ & $\mathbf{0.15}$ & $0.00$ & $0.00$ & $0.015$ & $0.45$ \\
TREC-COVID  & $0.15$ & $0.28$ & -     & $0\%$   & $1.0^{\dagger}$ & $0.00$ & $0.00$ & -      & -     \\
SciFact     & $0.05$ & $0.23$ & -     & $0\%$   & $1.0^{\dagger}$ & $0.00$ & $\mathbf{1.00}$ & - & -     \\
NFCorpus    & $0.15$ & $0.11$ & $0.04$ & $34\%$  & $0.71$          & $0.00$ & $\mathbf{0.53}$ & -      & -     \\
\bottomrule
\end{tabular}}

\smallskip
{\footnotesize $^{\dagger}$ abstains: the router floor $\rhob B$ exceeds the
requested budget, so no certificate tighter than $B{=}1$ is issuable. The two
abstentions have different causes: SciFact's is a genuine router-error floor,
whereas TREC-COVID's $\rhob B{=}0.28$ is inflated by its small sample ($50$ test
queries, hence a loose Clopper-Pearson bound), not a contamination floor. The
oracle that uses test-time labels also never violates (omitted for space).
Recall-bearing certified operating points are the feasible budgets of
\Cref{fig:pareto}.}
\end{table}

%% file: figures/table2_wrap.tex
\begin{table}[t]\centering
\caption{A stronger reranker improves ranking but does \emph{not} reduce contamination; C3R's certificate is reranker-agnostic.}
\label{tab:wrap}
\begin{tabular}{lccc}
\toprule
Reranker & Recall@10 & nDCG@10 & overall CSCR@10 \\
\midrule
MiniLM-L6 & 0.417 & 0.406 & 0.287 \\
Qwen3-Reranker-8B & 0.522 & 0.522 & 0.258 \\
\bottomrule
\end{tabular}\end{table}

%% file: figures/table3_priorart.tex
\begin{table*}[t]\centering
\caption{Positioning vs prior work. No prior method combines risk control of a bounded loss, a \emph{latent} test-time group, and heterogeneous per-group budgets.}
\label{tab:priorart}
\resizebox{\textwidth}{!}{%
\begin{tabular}{llcccc}
\toprule
Work & Guarantee type & Group at test time & Risk vs coverage & Per-group budgets & Missing for our setting \\
\midrule
Sesia et al.\ \cite{sesia2024noisy} & noise-corrected coverage & class (standard) & coverage & no & latent group; risk control \\
Einbinder et al.\ \cite{einbinder2024labelnoise} & robust coverage & none & coverage & no & groups; risk; budgets \\
Gibbs-Cherian-Cand\`es \cite{gibbs2025conditional} & conditional coverage & computable from $x$ & coverage & no & latent group; risk control \\
Kandinsky CP \cite{bairaktari2025kandinsky} & weighted cond.\ coverage & fractional (estimated) & coverage & no & risk control; true-group transfer \\
Posterior CP \cite{zhang2024posterior} & approx.\ cond.\ validity & data-driven clusters & coverage & no & risk; quantified slack; budgets \\
C-RAG \cite{kang2024crag} & conformal risk control & none & risk (generation) & no & groups; per-group budgets \\
CCRA-S \cite{ccra2025} & group risk control & observed strata & risk & per-stratum & \emph{latent} group; transfer \\
\midrule
\textbf{C3R (ours)} & \textbf{conformal risk control} & \textbf{latent (inferred)} & \textbf{risk (contamination)} & \textbf{heterogeneous} & \textbf{ - } \\
\bottomrule
\end{tabular}}\end{table*}

%% file: figures/table4_ablation.tex
\begin{table}[t]\centering
\caption{Ablation: NFCorpus certificate width. Hoeffding cannot certify; dropping the $\bar\rho/\bar\pi$ correction yields an unsafe (too-tight) certificate.}
\label{tab:ablation}
\begin{tabular}{lc}
\toprule
Variant & NFCorpus cert. \\
\midrule
Full (HB, Bonferroni, $\bar\rho/\bar\pi$, AMBIG) & 0.709 \\
$-$ $\bar\rho/\bar\pi$ correction (unsafe) & 0.150 \\
$-$ AMBIG bucket & 0.685 \\
Hoeffding (vs HB) & 1.000 \\
\v{S}id\'ak (vs Bonferroni) & 0.693 \\
\bottomrule
\end{tabular}\end{table}

%% file: figures/table5_runtime.tex
\begin{table}[t]\centering
\caption{The certified control layer adds negligible arithmetic atop the frozen retrieval stack; calibration is offline.}
\label{tab:runtime}
\begin{tabular}{lc}
\toprule
Component & Cost \\
\midrule
Control layer (mismatch + demotion) & 0.007 ms/query \\
Calibration ($D_1,D_2$) & offline \\
Frozen BM25+dense+rerank stack & dominates \\
\bottomrule
\end{tabular}\end{table}

%% file: sec/7_conclusion.tex
\section{Conclusion}
\label{sec:conclusion}

We presented C3R, a drop-in layer that certifies a per-domain contamination
budget for multi-domain retrieval using only an inferred domain posterior. Its
core is a two-split conformal scheme whose transfer bound crosses from the
inferred domain to the true domain with fully estimable slack, supports
heterogeneous budgets, and inverts for deployment. On an open multi-domain
testbed the certificate never violates while marginal control fails the
contaminated domains, and soft demotion retains markedly more recall than the
strongest calibrated hard filter where budgets are tight.

\paragraph{Deploying C3R in practice.}
The method exposes three knobs, and our experiments suggest concrete defaults.
\emph{The probe} need not be strong, only \emph{calibrated}: a temperature-scaled
logistic head on frozen embeddings places the query-style router in the usable
region of the vacuity frontier ($\rhob\approx0.13$), whereas a document-trained
probe is vacuous; a practitioner should check $\rhob$ on a held-out split before
trusting the certificate, since the certificate width is governed by it.
\emph{The budget} $\alpha_\dom$ should be set per domain from the tolerable
contamination of the \emph{consumer}, not the retriever: a strict $\alpha$ on a
domain whose router floor $\rhob B$ exceeds it will simply return nothing, which
is the safe action but an operational cost, so a deployer who needs availability
on a hard domain must either accept a looser certified budget (a guaranteed
\emph{reduction} rather than a tight bound) or invest in a better router.
\emph{Abstention} should be surfaced, not hidden: an empty certified result set is
a signal that the requested guarantee is infeasible at the current router quality,
and is more useful to a downstream compliance workflow than a silently
contaminated list. Because the layer is frozen-stack and adds $0.007$\,ms/query,
it can be A/B-tested behind an existing retriever with no retraining.

\paragraph{Is provenance the right object to certify?}
We control cross-domain \emph{provenance} mismatch (CSCR), and our appropriateness
probe shows this is not the same as \emph{harm}: only $17\%$ of cross-domain
retrievals are wrong-authority, while $54\%$ are benign cross-cutting regulation
that a reader could legitimately use. CSCR therefore over-counts harm, and
demoting on it has a recall cost - the visible price of acting on a conservative
signal. We argue provenance is nonetheless the right target for a \emph{label-free,
inference-time} guarantee, for two reasons. First, appropriateness is not
computable at query time without the very labels the setting denies us; provenance
is the strongest objective signal available without an oracle, and it
\emph{conservatively contains} the harmful subset (wrong-authority $\subseteq$
wrong-provenance for cross-domain errors), so a provenance certificate bounds that
harm from above. Second, soft demotion turns the recall cost into a tunable
trade-off rather than a hard filter: the budget $\alpha$ chooses how much benign
cross-cutting evidence to keep in exchange for a tighter harm bound. A method that
certified appropriateness \emph{directly} would need an appropriateness oracle at
calibration time; integrating even a partial human-annotated layer into the
certified loss - so the budget is on harm, not provenance - is the most valuable
extension this work points to.

\paragraph{Scope and limitations.}
Our BEIR domains are general-purpose corpora; \textsc{Sector-Bench}
(\Cref{tab:generalize}) closes part of the gap to the regulated setting by
certifying contamination on actual public federal regulation, where provenance is
objective. It does not directly certify \emph{appropriateness} - whether a wrong-provenance
passage carries genuinely wrong regulatory authority. An LLM-judged probe
(\Cref{sec:exp}) gives a first answer - $17\%$ of contaminating retrievals are
wrong-authority versus $0\%$ within-domain, and since these are a subset of
wrong-provenance ones the CSCR certificate bounds them - but a \emph{human}-annotated
appropriateness layer with inter-annotator agreement, and settings where domain
boundaries are contested or labels are truly hidden, remain the central pieces of
future work.
Two further caveats: TREC-COVID contributes only $50$ test queries, so its
per-domain certificate is a small-sample stress case that often abstains; and
the naive scheme's failure mode is established in simulation rather than on
BEIR-MIX, because our probe is good enough that naive transfer does not visibly
break on this benchmark. The sample-complexity rule $n_{\min}\approx
\ln(e/\delta)/\alpha$ is theoretically derived and empirically supported on the
main testbed; we replicate the certificate's validity and the soft-demotion
advantage on three further open pools spanning the contamination spectrum
(\Cref{tab:generalize}), but the subject-overlap trend rests on only four
constructed pools and merits a larger systematic study.

\paragraph{Threats to validity.}
We name the main ones explicitly. \emph{External validity:} all testbeds are
general-purpose public corpora with objectively known provenance; whether the
certificate transfers to regulated sectors, where domain boundaries are
contested and labels are costly, is unvalidated and is the central future-work
item. \emph{Probe sensitivity:} the certificate width is governed by the router
errors $\rhob,\pib$, which depend on the probe and its training split; a weaker
probe yields a wider (but still valid and \emph{visible}) certificate, and at the
extreme the bound is vacuous (\Cref{fig:frontier}) - so results are contingent on
a probe in the usable region, not on a perfect router. \emph{Dataset bias:}
domain $=$ source dataset, so the contamination we measure inherits each BEIR
dataset's topic and qrel conventions; the magnitude of contamination is therefore
a property of the pool composition (\Cref{tab:generalize}), not a universal
constant.

\paragraph{Future work.}
The natural next step is a regulated-domain benchmark with expert domain and
appropriateness annotations, on which the inferred-domain certificate can be
stress-tested where it matters most; online recalibration under distribution
shift; and joint certification of contamination together with relevance. We will
release the BEIR-MIX assembly, the CSCR evaluation protocol, and the C3R
calibration code to support these directions.

%% file: sec/8_ethics.tex
\section*{Ethical Considerations}
This work studies a reliability mechanism for retrieval and uses only public,
open corpora - BEIR datasets, CQADupStack forums, and public US CFR/eCFR
regulatory text - with released or objective (source-provenance) labels; no human
subjects, personal data, or proprietary corpora are involved. The intended
effect of C3R is to \emph{reduce} the risk that retrieval-augmented systems
surface wrong-domain evidence, which is a safety-positive goal in high-stakes
settings. We note two honest caveats. First, a certificate is only as meaningful
as its stated assumptions: deployers must not read a per-domain contamination
bound as a guarantee of factual correctness or of fitness for a regulated
purpose. \textsc{Sector-Bench} validates provenance-based contamination control on
public regulatory text, but not expert-judged legal or operational
appropriateness, which a certified provenance bound does not supply.
Second, the domain probe could encode dataset-specific biases; because the method
is label-free at inference, an over-trusted but miscalibrated probe could degrade
service for under-represented query types. C3R mitigates this by making router
error visible in the certificate width rather than silent, but we recommend
auditing probe calibration before deployment. We provide a publicly released artifact
(code, result files, benchmark assembly) to support independent scrutiny.

%% file: sec/A_appendix.tex
\section{Proof of the Per-Domain Transfer Certificate}
\label{app:proof}

We prove the per-domain bound \eqref{eq:cert}; simultaneity over domains follows
from the Bonferroni allocation and union bound discussed after the theorem.

Fix a domain $\dom$ and condition on $y(q)=\dom$. Write $L=\cscr@K(q)$ for the
served contamination loss and partition on the router outcome:
\begin{align}
\E[L\mid y=\dom]
&=\Pr(\ghat=\dom\mid y=\dom)\,\E[L\mid y=\dom,\ghat=\dom]\nonumber\\
&\quad+\Pr(\ghat\neq\dom\mid y=\dom)\,\E[L\mid y=\dom,\ghat\neq\dom]\nonumber\\
&\le (1-\rho_\dom)\,\E[L\mid y=\dom,\ghat=\dom]+\rho_\dom B,
\label{eq:appdecomp}
\end{align}
using $\Pr(\ghat\neq\dom\mid y=\dom)=\rho_\dom$ and $L\le B$
(\Cref{ass:bdd}) for the misrouted mass.

\paragraph{Correctly-routed term.}
Queries with $\ghat=\dom$ are served threshold $\tau_\dom$, and the
group-$\dom$ RCPS step certifies, with confidence $1-\delta_\dom$,
$\E[L\mid\ghat=\dom]\le\alpha_\dom$, a bound on the risk over the \emph{inferred}
group (the finite-sample correction is absorbed into the choice of $\tau_\dom$,
not added to $\alpha_\dom$). We need the risk over the truly-$\dom$ subset of that
group. Let $A=\{y=\dom\}$ and $G=\{\ghat=\dom\}$. By definition
$\pi_\dom=\Pr(\neg A\mid G)$, so $\Pr(A\mid G)=1-\pi_\dom$, and since $L\ge 0$,
\begin{equation}
\E[L\mid G]\ge \Pr(A\mid G)\,\E[L\mid G,A]=(1-\pi_\dom)\,\E[L\mid y=\dom,\ghat=\dom].
\end{equation}
Rearranging and applying the RCPS bound,
$\E[L\mid y=\dom,\ghat=\dom]\le \E[L\mid G]/(1-\pi_\dom)\le
\alpha_\dom/(1-\pi_\dom)$. Substituting into \eqref{eq:appdecomp},
\begin{equation}
\E[L\mid y=\dom]\le (1-\rho_\dom)\frac{\alpha_\dom}{1-\pi_\dom}+\rho_\dom B .
\label{eq:apptrue}
\end{equation}

\paragraph{Plugging in estimated bounds (non-circularity).}
\eqref{eq:apptrue} is in terms of the \emph{true} $\rho_\dom,\pi_\dom$, which we
do not know. The right-hand side is non-decreasing in $\rho_\dom$ whenever
$B\ge \alpha_\dom/(1-\pi_\dom)$ (the regime of interest, since $B$ upper-bounds
the loss) and non-decreasing in $\pi_\dom$; therefore replacing them with the
one-sided Clopper-Pearson \emph{upper} bounds $\rhob_\dom,\pib_\dom$ can only
increase it, preserving the inequality. The estimates $\rhob_\dom,\pib_\dom$ are
computed on $\Done$, while $\tau_\dom$ (hence the certified $\alpha_\dom$) is
computed on $\Dtwo$; by \Cref{ass:indep} these are independent, so the binomial
coverage events and the RCPS risk-control event are jointly controlled by a
union bound and no data is used both to fit and to certify the same threshold.
This is the step that makes the slack estimable rather than circular. Combining
the binomial coverage (each $\rhob,\pib$ valid with its allocated error) with
\eqref{eq:apptrue} yields \eqref{eq:cert}. \hfill$\qed$

\paragraph{Remark (impossibility scope).}
\Cref{thm:main} is an \emph{approximate} group-conditional statement: the slack
over the group budget - the router-error term $\rhob_\dom B$ and the impurity
inflation factor $1/(1-\pib_\dom)$ - does not vanish unless the router is perfect
($\rhob_\dom=\pib_\dom=0$). This is
unavoidable - exact conditional coverage with an estimated conditioning variable
is impossible in finite samples~\cite{barber2021limits} - and is the price of
operating without test-time labels. The contribution is that the slack is fully
estimable and degrades gracefully and visibly with router quality.

\section{Additional Details}
\label{app:details}

\paragraph{Calibration protocol.}
For each domain $\dom$ we draw, per resample, a disjoint partition of the
truly-$\dom$ queries: $30\%$ are held out for evaluation and the remaining $70\%$
are split evenly into $\Done$ (router-error estimation) and $\Dtwo$ (threshold
calibration); the domain probe is trained once on a third, disjoint pool of
queries and frozen. On $\Done$ we form one-sided Clopper-Pearson upper bounds
$\rhob_\dom,\pib_\dom$ at level $1-\gamma_{\mathrm{each}}$, where the global error
$\gamma{=}0.1$ is split across the $3C{+}1$ statements ($2C$ binomial bounds and
$C{+}1$ group risk-control tests) by a Bonferroni allocation, which is
unconditionally valid and is the allocation behind every reported certificate (a
\v{S}id\'ak allocation is second order, \Cref{tab:ablation}). On $\Dtwo$ we
run group-$\dom$ RCPS at the rescaled level $\alpha_\dom$ of \eqref{eq:rescale}
and read the largest feasible threshold $\tau_\dom$. All reported quantities are
means over $1000$ such resamples; the demotion threshold grid is
$\tau\in\{0,0.02,\dots,1\}$ ($51$ points), $K{=}10$, and the bounded-loss range is
$B{=}1$.

\paragraph{Hoeffding-Bentkus RCPS.}
The Hoeffding-Bentkus RCPS $p$-value for a group of $n$ losses with empirical
mean $\hat L$ against level $\alpha$ is $\min\{\,e\cdot\Pr(\mathrm{Bin}(n,\alpha)
\le \lceil n\hat L\rceil),\ \exp(-2n(\alpha-\hat L)_+^2)\,\}$~\cite{bates2021rcps};
we certify a threshold $\tau$ when this is $\le\delta$, taking the most permissive
$\tau$ that remains certified. The Bentkus term dominates (is tighter) when the
observed loss sits well below the budget, which is the regime our calibration
operates in; the additive Hoeffding margin alone (the ablation of
\Cref{tab:ablation}) is far looser and renders even feasible domains
uncertifiable.

\paragraph{Budget-rescaling inversion.}
The corollary \eqref{eq:rescale} is the exact inverse of the certificate
\eqref{eq:cert}. Setting the right-hand side of \eqref{eq:cert} equal to a target
$\alpha^\star_\dom$ and solving for the run-budget $\alpha_\dom$,
\begin{equation*}
\alpha^\star_\dom=(1-\rhob_\dom)\frac{\alpha_\dom}{1-\pib_\dom}+\rhob_\dom B
\;\Longleftrightarrow\;
\alpha_\dom=\big(\alpha^\star_\dom-\rhob_\dom B\big)\frac{1-\pib_\dom}{1-\rhob_\dom},
\end{equation*}
which is \eqref{eq:rescale}. The map is monotone increasing in $\alpha^\star_\dom$,
so a tighter target yields a tighter run-budget; it crosses zero at
$\alpha^\star_\dom=\rhob_\dom B$, the router floor, below which $\alpha_\dom\le 0$
and the domain is infeasible. Running group-$\dom$ RCPS at this $\alpha_\dom$ and
substituting back reproduces $\alpha^\star_\dom$ on the left of \eqref{eq:cert},
so the delivered per-domain certificate equals the requested target exactly when
feasible.

\paragraph{Sample-complexity onset.}
The feasibility threshold $n_{\min}$ follows from the Bentkus term. With observed
group loss $\hat L\to 0$ (an aggressively demoted, clean served set), the
Hoeffding-Bentkus $p$-value reduces to $e\cdot\Pr(\mathrm{Bin}(n,\alpha){=}0)
=e\,(1-\alpha)^n$. Certification requires this to be $\le\delta$, i.e.
$(1-\alpha)^n\le\delta/e$, hence
\begin{equation*}
n\;\ge\;\frac{\ln(e/\delta)}{-\ln(1-\alpha)}\;\approx\;\frac{\ln(e/\delta)}{\alpha}
\quad(\text{small }\alpha),
\end{equation*}
the $n_{\min}\approx\ln(e/\delta)/\alpha$ used in \Cref{sec:theory}. Since
$-\ln(1-\alpha)\ge\alpha$, the approximation $\ln(e/\delta)/\alpha$ slightly
\emph{over}estimates the exact threshold $\ln(e/\delta)/(-\ln(1-\alpha))$, so the
stated $n_{\min}$ is a mildly \emph{conservative} leading-order estimate and the
true onset is marginally smaller; \Cref{fig:nmin} confirms the empirical
feasibility turns on at the predicted scale. A domain is therefore servable at run-budget $\alpha_\dom$ once
its $\Dtwo$ group exceeds $\approx\ln(e/\delta)/\alpha_\dom$ queries, which, with
the rescaling above, is the precise sense in which a stricter target or a weaker
router (larger $\rhob_\dom B$, smaller $\alpha_\dom$) demands more calibration data
or forces abstention.

\paragraph{Reproducibility.}
The frozen retrieval stack is BM25 ($k_1{=}0.9,b{=}0.4$) $+$ BGE-base dense with
RRF ($k{=}60$) and a cross-encoder reranker trained on MS-MARCO~\cite{nguyen2016msmarco}; the probe is a
temperature-scaled logistic head (an MLP variant in the ablations); the random
seed is $42$ throughout. The probe, per-domain splits, locked budgets, every
result file behind each figure and table, and the assembly scripts for all four
testbeds (\Cref{tab:testbeds}) are provided in the publicly released artifact.